\documentclass[accepted]{uai2026} 

\usepackage[american]{babel}

\usepackage{natbib} 
\usepackage{mathtools} 
\usepackage{siunitx} 
\usepackage{booktabs} 
\usepackage{tikz} 

\usepackage[table]{xcolor}

\usepackage{amsthm} 
\newtheorem{theorem}{Theorem}[section]
\newtheorem{proposition}[theorem]{Proposition}

\usepackage{amsmath,amsfonts,bm}

\def\eqref#1{equation~\ref{#1}}

\def\1{\bm{1}}
\newcommand{\data}{\mathcal{D}}

\def\vzero{{\bm{0}}}

\def\vmu{{\bm{\mu}}}

\def\vb{{\bm{b}}}

\def\vd{{\bm{d}}}

\def\vx{{\bm{x}}}

\def\vz{{\bm{z}}}

\def\mA{{\bm{A}}}

\def\mI{{\bm{I}}}

\def\mW{{\bm{W}}}
\def\mX{{\bm{X}}}

\def\mSigma{{\bm{\Sigma}}}

\DeclareMathAlphabet{\mathsfit}{\encodingdefault}{\sfdefault}{m}{sl}
\SetMathAlphabet{\mathsfit}{bold}{\encodingdefault}{\sfdefault}{bx}{n}

\newcommand{\ie}{i.e., }

\newcommand{\eg}{e.g., }

\newcommand{\Real}{\mathbb{R}}
\newcommand{\Natural}{\mathbb{N}} 

\newcommand{\bigO}[1]{\mathcal{O}\left( #1 \right)}

\newcommand{\N}[1]{\mathcal{N}\left( #1\right)}

\newcommand{\p}[1]{p{\left( #1 \right)}}
\newcommand{\cp}[2]{p{\left( #1 \mid #2 \right)}}

\newcommand{\todoi}[1]{\todo[inline]{#1}}

\newcommand{\db}[1]{\textcolor{blue}{#1}}

\newcommand{\pF}{\mathrm{F}}
\newcommand{\F}[1]{\pF{\left( #1 \right)}}
\newcommand{\cF}[2]{\pF{\left( #1 \mid #2 \right)}}
\newcommand{\eF}[3]{\pF_{#1}^{#2}{\left( #3 \right)}}

\newcommand{\pwl}[2]{p_{#1}{\left( #2 \right)}}

\newcommand{\wvzt}{\widetilde{\vz_t}}
\newcommand{\mut}{\boldsymbol{\mu_t}}
\newcommand{\cmut}[1]{\boldsymbol{\mu_{#1}}}
\newcommand{\Sigmat}{\boldsymbol{\Sigma_t}}

\usepackage{hyperref}
\usepackage[capitalise]{cleveref}
\usepackage{url}
\usepackage{enumitem}
\newlist{Properties}{enumerate}{2}
\setlist[Properties]{label=Property \arabic*., font=\textbf, itemindent=*}

\usepackage{caption}
\usepackage{subcaption}  
\usepackage{wrapfig} 
\usepackage{tikz}
\usepackage[most]{tcolorbox}
\usepackage{multirow}

\usepackage{algorithm}
\usepackage{algpseudocode} 

\usepackage[textsize=tiny]{todonotes}

\title{Anomaly detection in time-series via inductive biases  in the latent space of conditional normalizing flows}

\author[1]{\href{mailto:david.baumgartner@ntnu.no}{David Baumgartner}{}}
\author[2,3,4]{\href{mailto:eliezer.silva@uc.pt}{Eliezer de Souza da Silva}{}}
\author[2,5]{\href{mailto:iurteaga@bcamath.org}{Iñigo Urteaga}{}}
\affil[1]{%
    Department of Computer Science\\
    Norwegian University of Science and Technology\\
    Norway
}
\affil[2]{%
  BCAM --- Basque Center for Applied Mathematics\\
  Spain
}
\affil[3]{%
    Department of Informatics Engineering\\
    University of Coimbra\\
    Portugal
}
\affil[4]{%
  CISUC/LASI --- Centre for Informatics and Systems\\
  University of Coimbra\\
  Portugal
}
\affil[5]{%
  IKERBASQUE -- Basque Foundation for Science\\
  Spain
}

\begin{document}
\maketitle

\begin{abstract}
Deep generative models for anomaly detection in multivariate time-series are typically trained by maximizing observed data likelihood. However, likelihood in observation space measures marginal density rather than conformity to structured temporal dynamics, and therefore can assign high probability to anomalous or out-of-distribution samples.
We address this structural limitation by relocating the notion of anomaly to a prescribed latent space. 
We introduce explicit inductive biases in conditional normalizing flows, modeling time-series observations within a discrete-time state-space framework that constrains latent representations to evolve according to prescribed temporal dynamics. 
Under this formulation, expected behavior corresponds to compliance with a specified distribution over latent trajectories, while anomalies are defined as violations of these dynamics.
Anomaly detection is consequently reformulated as a statistically grounded compliance test, such that observations are mapped to latent space and evaluated via goodness-of-fit tests against the prescribed latent evolution. 
This yields a principled decision rule that remains effective even in regions of high observation likelihood.
Experiments on synthetic and real-world time-series demonstrate reliable detection of anomalies in frequency, amplitude, and observation noise, while providing interpretable diagnostics of model compliance.
\end{abstract}
\section{Introduction}
\label{sec:intro}

Anomaly detection (AD), also known as outlier or novelty detection, is a fundamental problem in signal processing and machine learning with applications ranging from fraud detection and predictive maintenance to medical imaging and network security~\citep{chandola_anomaly_2009,pimentel_review_2014}.
Anomaly detection for time-series have temporal dependence that introduces additional complexity, requiring methods that distinguish structured deviations from expected dynamics~\citep{schmidl_anomaly_2022}.

Classical AD typically relies on hand crafted anomaly scores, such as reconstruction error, distance to cluster centers, or density estimates,
which must be converted into binary decisions via manually chosen thresholds.
In practice, these thresholds are often tuned using labeled examples of anomalous behavior or extensive domain expertise, which can be costly, brittle across domains, and difficult to justify statistically when anomalies are inherently rare and heterogeneous~\citep{chandola_anomaly_2009}.

In many realistic settings, annotated anomalies are scarce or unavailable, whereas large amounts of data describing \emph{expected behavior} can be collected at low cost.
This motivates unsupervised anomaly detection, where the learning algorithm is trained on data representing expected behavior and must infer deviations without explicit supervision~\citep{pimentel_review_2014}.
In this setup, the objective is to learn an inherent notion of what constitutes expected behavior directly from data, so that deviations from this learned notion can be flagged as anomalies.

Formally, we posit a parametric generative model with density $\pwl{\theta}{\vx}$ encoding the \emph{expected behavior} within a region of the data space with a well defined statistical distribution $\pwl{\theta}{\cdot}$, with samples outside of that region regarded as out-of-distribution (OOD) or anomalous.
With this probabilistic framework, anomalies can be scored via quantities intrinsic to the generative model, such as likelihoods, reconstruction-based discrepancies, or hybrid criteria that combine data and latent-space representations. Crucially, this also allows us to develop an approach for AD using goodness-of-fit (GOF) statistical tests.

Modern deep generative models (DPMs) provide flexible parametric families for approximating complex data distributions and have become popular building blocks for probabilistic modeling ~\citep{goodfellow_generative_2014,kingma_auto-encoding_2022}. Recent studies demonstrate the effectiveness of and interest in unsupervised DPMs for AD across domains~\citep{pang_deep_2021,xia_gan-based_2022,berahmand_autoencoders_2024}. 

However, these models are not free of pitfalls:
recent work on OOD detection has revealed that deep generative models can behave counterintuitively~\citep{serra_input_2019,li_position_2025},
sometimes assigning higher likelihoods to OOD inputs than to in-distribution data. 
Notably, \citet{morningstar_density_2021} cautioned us against the naive use of negative log-likelihood (NLL) as a proxy for \emph{typicality}, 
highlighting the importance of appropriate inductive biases and model diagnostics for AD~\citep{zhang_understanding_2021}.
These shortcomings reveal a need for generative models that
not only fit the nominal data distribution well,
but also induce a robust and interpretable separation between expected and unexpected patterns.
As argued by \citet{li_position_2025}, likelihood-based training alone does not provide an inherent notion of unexpected behavior, and such distinctions must arise from explicit structural inductive biases encoded in the model.


In this work, we propose an unsupervised, state-space probabilistic framework for time-series data and study how inductive biases in representation-space can overcome some of these limitations.

Specifically, we define a normalizing-flow-based DPM with explicit inductive biases
to capture the expected behavior of data via latent, prescribed dynamics.
Conditioned on a successful training procedure,
the framework provides an inherent (statistical) definition of expected (in-distribution)
versus anomalous (out-of-distribution) behavior, enabling unsupervised anomaly detection.

Our contributions include:
\begin{itemize}
    \item A state-space deep generative model that couples a conditional normalizing flow with  explicit (\eg linear-Gaussian) latent dynamics, constraining observations to map to a temporally coherent, latent trajectories of prescribed density.
    \item A statistically principled, unsupervised (\ie label- and threshold-free) anomaly detector based on a goodness-of-fit (\eg a multivariate Kolmogorov-Smirnov) test in latent space, capable of identifying anomalies even in high-density regions of the DPM.
    \item A built-in, compliance diagnostic to identify when the DPM training procedure is successful in enforcing the prescribed inductive bias, explicitly signaling when the unsupervised AD procedure is ready for testing.
    \item Showcasing empirically
    ($i$) the in-distribution AD issues with negative log-likelihood-based scores,
    ($ii$) the robustness and detection accuracy of the proposed framework,
    and ($iii$) its successful performance in-par with established baselines on real-data AD scenarios.
\end{itemize}




\section{Background and Related Work}
\label{sec:background}

We provide below a succinct background on AD and normalizing flows, 
situating our work within the existing literature.



Anomaly, outlier, and OOD detection all aim to identify patterns in data that deviate from expected behavior~\citep{ruff_unifying_2021,chandola_anomaly_2009}. 
Deep generative models have been widely explored
across this and other domains over the last few years~\citep{pang_deep_2021,xia_gan-based_2022,berahmand_autoencoders_2024}.
Specifically, time-series anomaly detection (TSAD) deals with connected and ordered data points over time, which yields additional AD challenges.
\citet{schmidl_anomaly_2022} provide a good overview of the complexities inherent in TSAD and the existing approaches to addressing it.
\citet{sorbo_navigating_2024} further highlight metrics used in this domain and point out issues that make fair comparisons across methods difficult.
VAE based DPM methods for AD often use reconstruction probability or related likelihood surrogates as anomaly scores, enabling a more principled interpretation than ad hoc error thresholds~\citep{xu_unsupervised_2018}.
Similarly, GAN-based approaches such as TadGAN~\citep{geiger_tadgan_2020} or USAD~\citep{audibert_usad_2020} employ sequence-to-sequence generators and critics to reconstruct time-series,
deriving anomaly scores from reconstruction residuals and critic responses,
highlighting anomalous segments rather than isolated point outliers.


Normalizing flows (NF) are generative probabilistic models that learn a bijective transformation that maps the data to a known distribution~\citep{papamakarios_normalizing_2021}.
These models enable both model fitting and sample generation; \ie
NFs can efficiently compute the log-likelihood of observations ($\log \p{\vx}$) and generate new data ($\vx \sim  \eF{}{-1}{\N{\cdot}}$), which requires tractable Jacobian determinants.
We denote the normalizing and likelihood of a data point $\vx$ as $\p{\vz = \F{\vx}}$ and the generation of a data point $\vx = \eF{}{-1}{\vz \sim \N{\cdot}}$, where $\F{\cdot; \theta}$ is the NF with parameters $\theta$ which we omit when it is clear from the context.

To model multivariate time-series data, 
we employ conditional normalizing flows (CNF)~\citep{rasul_multivariate_2020}.
NFs can be conditioned in different ways;
in this case, we focus on transformation-layer conditioning.
This setting requires paired inputs for the CNF, \ie $\cF{\vx}{\vd}: \Real^D \times \Real^M \rightarrow \Real^D$. 
Notice that $\vd \in \Real^M$ can be have different dimension than $\vx$.
Unsupervised training of CNF is performed using the data's negative log-likelihood, according to a base distribution and the change-of-variable formula,
with loss function $\mathcal{L}(\vx, \vd) = \log \p{\cF{\vx}{\vd}} + \sum_{l=1}^{L} \log \vert \det \mathbf{J}(\pF_l)(\vx_{l}|\vd) \vert,$
where $L$ is the total number of transformation layers in the CNF.
The Jacobian, $\mathbf{J}_l$, of each layer $l$ is a function of the input $\vx_{l}$ and the condition $\vd$.
\cite{kang_traffic_2022,moon_multivariate_2023,guan_conditional_2023,chen_robust_2025} use this conditioning concept for TSAD,
all focusing on various levels of embedding creation of the temporal context for the definition of $\vd$. 

\section{Proposed Methodology}
\label{sec:method}

In this section, we present a time-series probabilistic framework suited for anomaly detection,
based on a discrete-time state-space model
that captures the temporal evolution of the data
according to prescribed inductive biases.
In line with \citet{li_position_2025},
we argue that structural inductive biases are required for a model to know what constitutes \textit{unexpected behavior} solely from maximizing likelihood on samples with \textit{expected behavior}. 

We propose not only to
($i$) shift the AD mechanism from the ambient to the semantic latent space,
but also to ($ii$) enforce inductive biases in such latent space,
to capture the \textit{expected behavior manifold}.
For the former, we use CNFs
for the observation- to latent-space mapping.
For the latter, we prescribe expected dynamics
over the latent representations' evolution over time.
Rather than modeling static distributions over data,
we constrain the dynamics of the learned representations
to evolve according to prescribed inductive biases.
Consequently, compliance with the inductive bias,
\ie the prescribed expected dynamics, serves as the definition of
\textit{expected behavior}:
\begin{itemize}
    \item \textbf{Training-time compliance:} 
    With sufficient model capacity and under model likelihood optimization,
    the imposed inductive bias shapes the latent trajectories of observations,
    defining which temporal evolutions within representation space constitute \textit{expected behavior}.
    
    \item \textbf{Testing-time compliance:}
    At inference time, the mapped representations of new observations should be
    similar to latent trajectories observed during training and
    consistent with the prescribed latent distribution. 
    Hence, an input sequence, after the non-linear observation- to latent-space mapping, 
    is an anomaly if it does not comply with such \textit{expected behavior}. 
\end{itemize}

With this framework, we cast unsupervised time-series anomaly detection as a check for inductive bias compliance, rather than relying on model-based likelihood scores.

We proceed to describe in detail each of the components of the framework:
the CNF-based state-space model with inductive bias in Section~\ref{ssec:model}, along with its training procedure,
and the time-series anomaly detection procedure as a check for inductive bias compliance in Section~\ref{ssec:unsupervised_AD}.

\subsection{The probabilistic state-space model}
\label{ssec:model}

We denote the sequence of observations with $\vx_t \in \Real^D$, indexed by time $t \in \Natural$, and introduce a corresponding latent state variable $\vz_t \in \Real^D$ to represent the underlying semantic dynamics of the system.

There are two key components to the proposed deep probabilistic model:
\begin{enumerate}
    \item \textbf{A Conditional Normalizing Flow (CNF)}, parameterized by $\theta$ and with temporal context window $\mW$, mapping each observation $\vx_t$ into a latent representation $\vz_t$, conditioned on a finite history of previous observations, $\mW_t = \vx_{t-k:t-1}$:
    \begin{align}
    \label{eq:nf}
    \vz_t &= \cF{\vx_t}{\mW_t; \theta} \sim \N{\mut, \boldsymbol{\Sigma_t}} \footnotemark.
    \end{align}
    \footnotetext{
    For any random variable $\vz_t \sim \N{\vmu_t, \mSigma_t}$ with known sufficient statistics $\vmu_t, \mSigma_t$
    we define its whitened counterpart $\wvzt = \mSigma^{-1/2}_t (\vz_t - \vmu_t) \sim \N{\vzero, \mI}$.
    }
    
    \item \textbf{The Latent Dynamics}, imposing an explicit inductive bias on the temporal evolution of the latent representations, via deterministic dynamics of its mean:
    \begin{align}
    \label{eq:inductive_bias_0}
    \vmu_{0} & \sim \p{\phi_0} \;, \\
    \label{eq:inductive_bias_t}
    \vmu_t & = \psi(\vmu_{t-1} ; \phi) \; .
    \end{align}

    The inductive bias,
    which constrains latent trajectories to follow a prescribed dynamical law,
    is defined in a generic form,
    as a deterministic dynamic map $\psi(\cdot | \phi)$.
\end{enumerate}

Various inductive biases can be used within this general framework,
depending on the specific application, prior problem knowledge, or other practical reasons.
Next, we describe a simple inductive bias that is useful both for explainability and for anomaly detection.
    
\subsubsection{The linear-Gaussian latent dynamical model (LG-LDM)}
\label{sssec:linear_latent_dynamics}

We can impose linear-Gaussian expected dynamics of the latent representation via
\begin{align}
\label{eq:inductive_bias_linear_Gaussian}
\vmu_{0} &\sim \p{\phi_0} = \N{\vzero, \mI} \; , \\
\vmu_t &= \psi(\vmu_{t-1} ; \phi) = \mA \vmu_{t-1} + \vb \; ,
\end{align}
with uncorrelated latent uncertainty, $\mSigma_t = \mI, \; \forall t$,
where the learnable parameters of the LG-LDM are $\phi = \{\mA, \vb\}$.

With such inductive bias,
we impose a closed-form time-evolution for the expected representation $\vmu_t$ of the form
\begin{align}
\label{eq:inductive_bias_linear_Gaussian_mu}
    \vmu_t &= \mA^t \vmu_0 + \sum_{k=0}^{t-1} \mA^k \vb,
\end{align}
where $\mA^k$ is the $k$-th matrix power and the sum is a matrix–vector product at each time $k$.

If $\vmu_0 = \vzero$ and $\rho(\mA) < 1$, then $\lim_{t \rightarrow \infty} \vmu_t = (\mI-\mA)^{-1} \vb$, by the matrix geometric series:
\ie the trajectory converges to a fixed point, and if $\vb=\vzero$, then such a fixed point is the origin.

\subsubsection{Probabilistic training of the state-space model}
\label{sssec:training_procedure}

The training procedure that aligns the CNF mapping of observations with the prescribed inductive bias, \ie the latent dynamics, 
follows minimization of the negative log-likelihood (NLL) of the model over training data.

We jointly train the CNF parameters $\theta$
and the parameters of the latent dynamics $\phi$
via NLL optimization.
The procedure ensures that
the learned latent trajectories are consistent with the deterministic evolution encoded by $\psi(\cdot)$, both in their dynamics and its prescribed distribution.

Notably, the inductive bias-based DPM can be optimized either sequentially over the complete time-series or using sub-sequences over-time, with computational benefits.
We present these alternatives below, with the choice depending on the computational resources of practitioners.

\paragraph{Training with complete time-series.}
We can sequentially train the CNF over the complete time-series via
\begin{align}
\label{eq:cnf_loss_sequenctia}
\begin{split}
\mathcal{L}_{seq}(\data) &= - \frac{1}{T} \sum_{t=1}^{T} \biggr[ \log \pwl{t}{\cF{\vx_t}{\mW_t}} \\
& \hspace{10ex}+
\sum_{l=1}^{L} \log \vert \det \mathbf{J}(\pF_l)(\vx_{t,l}|\mW_t) \vert \biggr],
\end{split} \hspace{-2ex}\\
\text{where } &
    \pwl{t}{\cdot} = \N{\mut, \Sigmat} \;, \\
& \text{with }
    \cmut{0} \sim \p{\phi_0} \;, 
    \mut = \psi(\cmut{t-1} ; \phi) \; .
\end{align} 
Note that this training approach scales with $T$, as the optimization step of the loss depends on the full time-series.

\paragraph{Training with mini-batches of observations over time.}
Alternatively, for Markovian dynamics as in \cref{eq:inductive_bias_t}, a more computationally efficient approach is possible,
where the NLL is computed for sub-sequences over time.

We can split a time-series over $B$ sub-sequences/mini-batches of size $BS$, 
\ie $\mX_{b} = \{\vx_{bs}\}_{bs=1}^{BS}$,
and write
\begin{align}
\label{eq:cnf_loss_mini_batch}
\begin{split}
\mathcal{L}_{batch}(\data) &= - \frac{1}{B} \sum_{b=1}^{B} \frac{1}{BS} \biggr[ \log \pwl{b}{\cF{\mX_{b}}{\mW_{b}}} \\
    & \hspace{9ex} +
\sum_{l=1}^{L} \log \vert \det \mathbf{J}(\pF_l)(\mX_{b,l}|\mW_{b}) \vert \biggr],
\end{split} \hspace{-4ex} \\
\text{with } & \pwl{b}{\cdot} = \{\N{\vmu_{bs}, \mSigma_{bs}}\}_{bs=1}^{BS} \;,
\end{align}
where $\mW_{b}$ are the corresponding temporal contexts to each sample in the mini-batch,
and $\vmu_0 \sim \N{\vzero, \mI}$ in mini-batch $1$, and the last mean $\vmu_{bs}$ from the previous mini-batch $b-1$ is the initializing $\vmu_0$ for each consecutive mini-batch $b$.

In this mini-batched version of training,
we can update model parameters after each sub-sequence,
and $\mathcal{L}_{batch}(\data)$ reports the average loss over all mini-batches.

This alternative is viable whenever we can pre-compute all the necessary
expected representations for each mini-batch of the input time-series,
as we can then split the observations of a time-series over time
into sub-sequences/mini-batches,
reducing the computational dependency on the time-series length $T$.
Obviously, when $BS = T$ and $B = 1$, we recover the full sequential version. 

\subsection{Latent dynamic compliance as a statistical test}
\label{ssec:unsupervised_AD}

In our probabilistic framework,
the prescribed inductive bias is reflected in the learned latent dynamics.
Therefore, compliance of observations with such inductive bias and unsupervised anomaly detection are two-sides of the same coin:
\textit{expected and unexpected behavior} are nothing but
\textit{compliance and non-compliance} with the inductive bias.

Because the inductive bias in Equations~\ref{eq:inductive_bias_0}-~\ref{eq:inductive_bias_t} enforces
distributional constraints on latent trajectories,
learned representation compliance can be measured via a goodness-of-fit (GOF) 
test between the prescribed evolving distribution and the empirical distribution of the mapped trajectories.
The GOF indicates, at training time, whether the model learned the prescribed dynamics;
at test time, it is used to detect anomalies.

In what follows,
we use the multivariate Kolmogorov-Smirnov (MV-KS) GOF test to quantify compliance,
due to its generality, its non-parametric nature, and its exactness even under small sample sizes~\citep{naaman_tight_2021}.

The MV-KS test quantifies GOF to a prescribed distribution, providing the KS statistic ($s$) and a critical value ($\tau$) for decision-making:
we can reject that 
samples come from the prescribed distribution if the KS statistic is bigger than the critical value, \ie $s \geq \tau$.
The MV-KS critical value is non-parametric and dependent on the sample size.

\paragraph{The MV-KS test for CNF-learned latent representations.}
Let $\widehat F_\theta$ denote a trained CNF according to prescribed latent dynamics with fitted parameters $\widehat\phi$.
We define the CNF-based latent as
$\vz_t = \widehat F_\theta(\vx_t \mid \mW_t)$ and
$\wvzt = \widehat\Sigma_t^{-1/2}(\vz_t - \widehat \vmu_t)$
as its whitened counterpart.
Conditional on a well-fitted model,
whitening yields independent standard normal variables,
\ie $\wvzt \stackrel{\mathrm{i.i.d.}}{\sim} \N{\vzero, \mI}$.
Because the MV-KS test is a test of a fully specified continuous distribution against i.i.d. samples, its null distribution is universal (distribution-free) and invariant under the bijective transformation used to generate the samples:
\ie the CNF nonlinearity does not invalidate the MV-KS test
---details and the proof are provided in~\cref{axsec:ks_validity}.
The utility of the representation space MV-KS test to assess inductive bias compliance
is twofold, both as a training diagnostic and as an anomaly detector.

\paragraph{Compliance to inductive bias: successful learning.}
We expect and require that a well-trained CNF model maps observations of \textit{expected behavior} into latent dynamics that comply with the inductive bias.
Therefore, the result of a GOF test indicates whether the trained model is compliant with the prescribed dynamics.
We define the training or fitting diagnostic $\mathrm{FIT}$ based on the KS test statistic of the (whitened) training sequence
$s_T = \mathrm{KS}(\widetilde{\vz}_{t=1}^T, \N{\vzero,\mI})$
as $\mathrm{FIT} = \mathbf{1}_{s_T \leq \tau_T}$,
where the critical value $\tau_T$ is dependent on the sequence length.
Due to its non-parametric definition and sample-size adaptability, the MV-KS statistic computed on latent training trajectories provides
a data-driven notion of expected behavior,
\ie it measures how compliant latent representations are,
bypassing the need for threshold tuning.
$\mathrm{FIT}$ is
an automatic and trustworthy model training diagnostic metric,
as good model learning must enforce latent representations to obey prescribed latent distribution dynamics ---see an illustrative example in \cref{fig:fit_vs_nofit_4d},
where for the LG-LDM, the inductive bias prescribes standard Gaussian representations, whitened over time, to which latent trajectories comply.

\paragraph{Non-compliance to inductive bias: anomaly detection.}
On the contrary, we can identify \textit{unexpected behavior} by checking whether an input sequence, after the non-linear observation- to latent- transformation by the CNF, follows or not the expected dynamics, as determined by a GOF test.

Specifically, for the MV-KS test, we can flag a set of observations of length $w$ as an anomaly according to
\begin{equation}
\operatorname{H}(s_w) =
\begin{cases}
    \text{expected behavior,} & \text{if } s_w < \tau_w \\
    \text{unexpected behavior,} & \text{otherwise}.
\end{cases}
\end{equation}
Notice how, within this unsupervised AD framework based on MV-KS tests,
no manual AD threshold needs to be defined.
Namely, the CNF trained according to available data determines the operating point for anomaly detection.


\section{Evaluation}
\label{sec:eval}

We assess the proposed framework for anomaly detection on synthetic and real-world data by enforcing linear Gaussian dynamics in the latent space, \ie the LG-LDM inductive bias in Section~\ref{sssec:linear_latent_dynamics}. 
We first describe our experimental setup, including the hyperparameter optimization strategy and the reported metrics. 
We then study synthetic data to demonstrate the limitations of standard likelihood-based AD and illustrate the benefits of the proposed framework.
Finally, we compare our method against baselines on real-world data to assess its sensitivity and performance.

\subsection{Experimental Setup}
\label{ssec:setup}

Our framework is optimized using the NLL presented in \cref{sssec:training_procedure}, which jointly optimizes the CNF and, in these experiments, the LG-LDM parameters.
For the CNF, we use a variant of the RealNVP architecture, originally proposed by \citet{dinh_density_2017} and extended by \citet{rasul_multivariate_2020}. 

We evaluate our framework on synthetic and real data.
In the synthetic data experiments, we assess the performance across a range of frequency, amplitude, and noise changes as sequence anomalies (\cref{fig:data_space_test}).
On the real-world data, we evaluate and compare performance on a subset of univariate\footnote{
Univariate series are duplicated to adapt to the default multivariate interface of our framework.
} and multivariate real-world sequences (TSB-AD)~\citep{liu2024}.
We select a fair set of statistics and deep learning methods from TSB-AD as baselines, based on their rankings, as reported in~\citep{liu2024}.
Although we calculate all metrics based on TSB-AD's implementation, we can only compare results in detail on the \textit{Volume Under the Surface Precision-Recall} (VUS-PR) curve. 
This is because per-sequence results are publicly available only for this metric.
We also report aggregated, oracle-thresholded results on NLL \& KS for the proposed AD in \Cref{axssec:univarite_sequence_results} and \cref{axssec:multivariate_sequence_results}.

For model hyperparameter tuning,
we use a grid search for synthetic data
and Optuna~\citep{akiba_optuna_2019} (TPE; 20 warm-up + 100 trials) for TSB-AD data.
The hyperparameters and their evaluated ranges are listed in~\cref{tab:hyperparameters}.
Code, models, and tables are available in a GitHub repository~\footnote{\url{https://github.com/iurteagalab/probML_anomaly_detection}}.

\begin{table}[]
    \centering
    
    \caption{Overview of the hyperparameters and their search ranges. The hyperparameters refer to different parts of our framework, such as the LG-LDM (D), the CNF (C), or the implementation (S). ``NA'' stands for sequential training.}
    \label{tab:hyperparameters}
    
    \begin{tabular}{llll}
    & Parameter & Synthetic & Real-world \\
    \midrule
    D & Learning $\vb$ & [True, False] & True \\
    \midrule
    C & Temporal Context & [20, 40, 100] & \numrange{5}{500} \\
    C & CNF layers & [6, 8, 12] & \numrange{3}{40} \\
    C & Hidden layers & [1, 3] & \numrange{1}{20} \\
    C & Hidden layer size & [64, 128] & \numrange{4}{200} \\
    \midrule
    S & Batch size & [NA, 2048] & 2048 \\
    \end{tabular}
\end{table}


\subsection{Experimental Results}
\label{ssec:exp-results}


\paragraph{Negative log-likelihood Vs inductive bias compliance.}
We present in \cref{fig:main_figure} an example time-series
with amplitude and frequency anomalies blended in
---see shaded background areas and original training data with dashed lines.
\Cref{tab:synthetic_res_table} contains the metric results for this experiment,
comparing AD performance based on the raw NLL score and the proposed MV-KS alternative.

\begin{figure}[h!]
    \centering
    \includegraphics[width=1\linewidth]{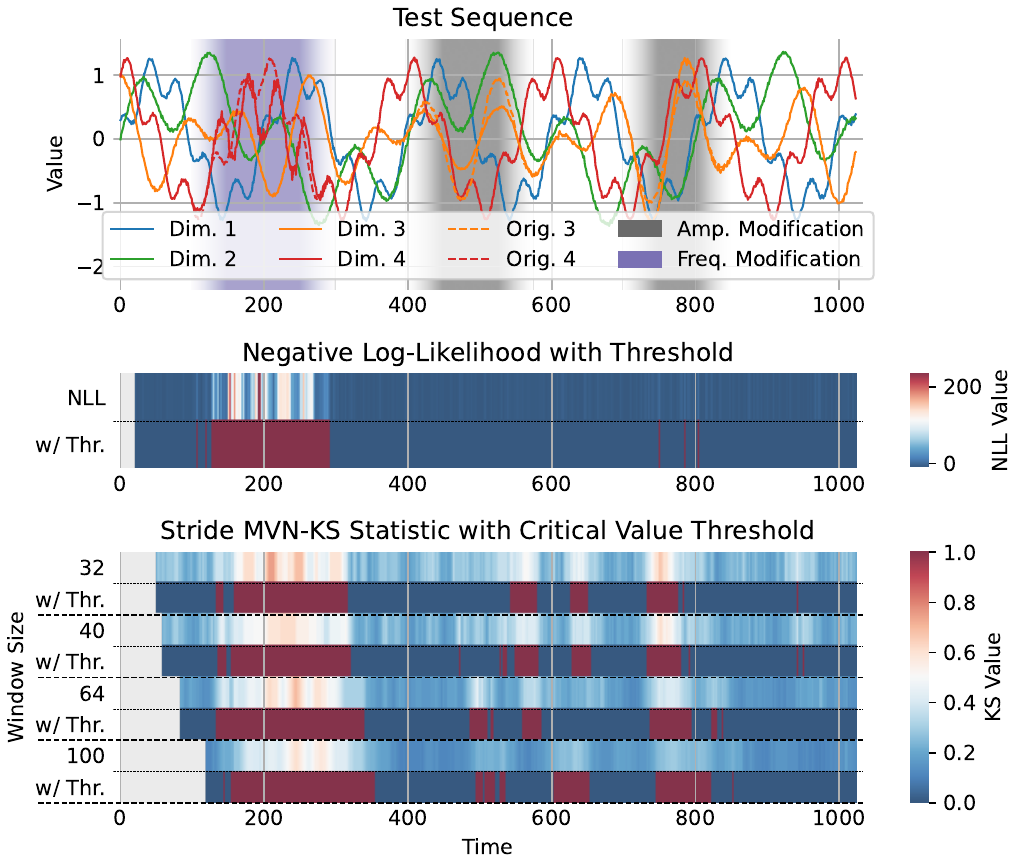}
    \caption{
    Anomaly detection comparison between the proposed model when using NLL scores and the proposed GOF test.
    (Top) The input sequence, contaminated with amplitude and frequency modifications (in shaded regions).
    (Middle) Heatmap of the continuous and thresholded NLL (threshold set to the maximum NLL value in training):
    NLL-based scores fail to detect amplitude changes.
    (Bottom) The MV-KS score heatmap, showing successful detection of the modified sections. 
    As detailed in \Cref{tab:synthetic_res_table}, our approach outperforms the NLL baseline across various metrics, highlighted by a $7\%$ improvement in the affiliation F1 score.
    }
    \label{fig:main_figure}
\end{figure}

Results in \cref{fig:main_figure} highlight the limitations of NLL-based scoring,
as it is only able to detect the first anomaly around $t\approx200$.
Frequency-based modifications within the sequence are flagged by both scores,
but for amplitude-based anomalies,
only the MV-KS-based AD identifies the anomalies around $t\approx500$ and $t\approx800$.
This occurs because the CNF tends to place these observed data points
into high-density latent regions.
However, even in these high-likelihood representation space regions,
these trajectories do not obey \textit{expected behavior}.
Hence, as we test $\wvzt$ compliance via the MV-KS GOF,
its score ---as shown by the heatmaps in \cref{fig:main_figure}---
reveals unexpected behavior around all three anomaly sequences around $t \approx \{200, 500, 800\}$.

\begin{table}[t!]
    \centering
    \caption{
    Non-parametric MV-KS results for the test case in \cref{fig:main_figure}.
    Parametric results covering NLL and MV-KS w/wo critical value (CV) thresholding or the AUC-ROC curve as threshold for scoring, are in~\cref{axsec:synthetic_f1_res_table}.
    }
    \label{tab:synthetic_res_table}
\begin{tabular}{lrrrr}
& \multicolumn{2}{c}{AUC} & \multicolumn{2}{c}{VUS} \\
Score Source & PR & ROC & PR & ROC \\
\midrule
NLL & 78.8 & 70.9 & 92.8 & 90.9 \\
KS w=20 & 79.9 & 71.9 & 92.4 & 90.4 \\
KS w=32 & 79.1 & 72.8 & 93.8 & 92.0 \\
KS w=40 & 80.1 & 73.0 & 94.6 & 92.8 \\
KS w=64 & \textbf{82.1} & \textbf{73.1} & \textbf{96.0} & \textbf{93.7} \\
KS w=100 & 74.5 & 66.3 & 92.7 & 91.1 \\
KS w=200 & 58.3 & 64.8 & 90.4 & 92.2 \\
KS w=400 & 78.3 & 69.1 & 89.5 & 86.9 \\
\end{tabular}
\end{table}

We further investigate the proposed unsupervised AD methodology's robustness
to varying anomaly intensities,
by assessing how sensitive the MV-KS test is
to variations in amplitude, frequency and observation noise
with respect to the training time-series.

\Cref{fig:data_space_test} summarizes the methodology's performance,
showcasing the MV-KS score-difference between the training sequence and the modified time-series, as well as the interpretable representation space for a subset of selected cases.

The latent, whitened $\wvzt$ space depicted in \cref{fig:data_space_test}  is compliant with
the LG-LDM inductive bias only on training data,
\ie Gaussian shape samples with $KS=0.02$;
while it showcases visually noticeable departures
(a doughnut shape, stretched and enlarged samples)
for minor and major test-set deviations.
Even sequences with a minor change in frequency and amplitude,
that result in a visually subtle difference,
are flagged by the MV-KS test: $s=0.076>0.02, \tau=0.075$.

\begin{figure}[!h]
    \centering
    \includegraphics[width=1\linewidth]{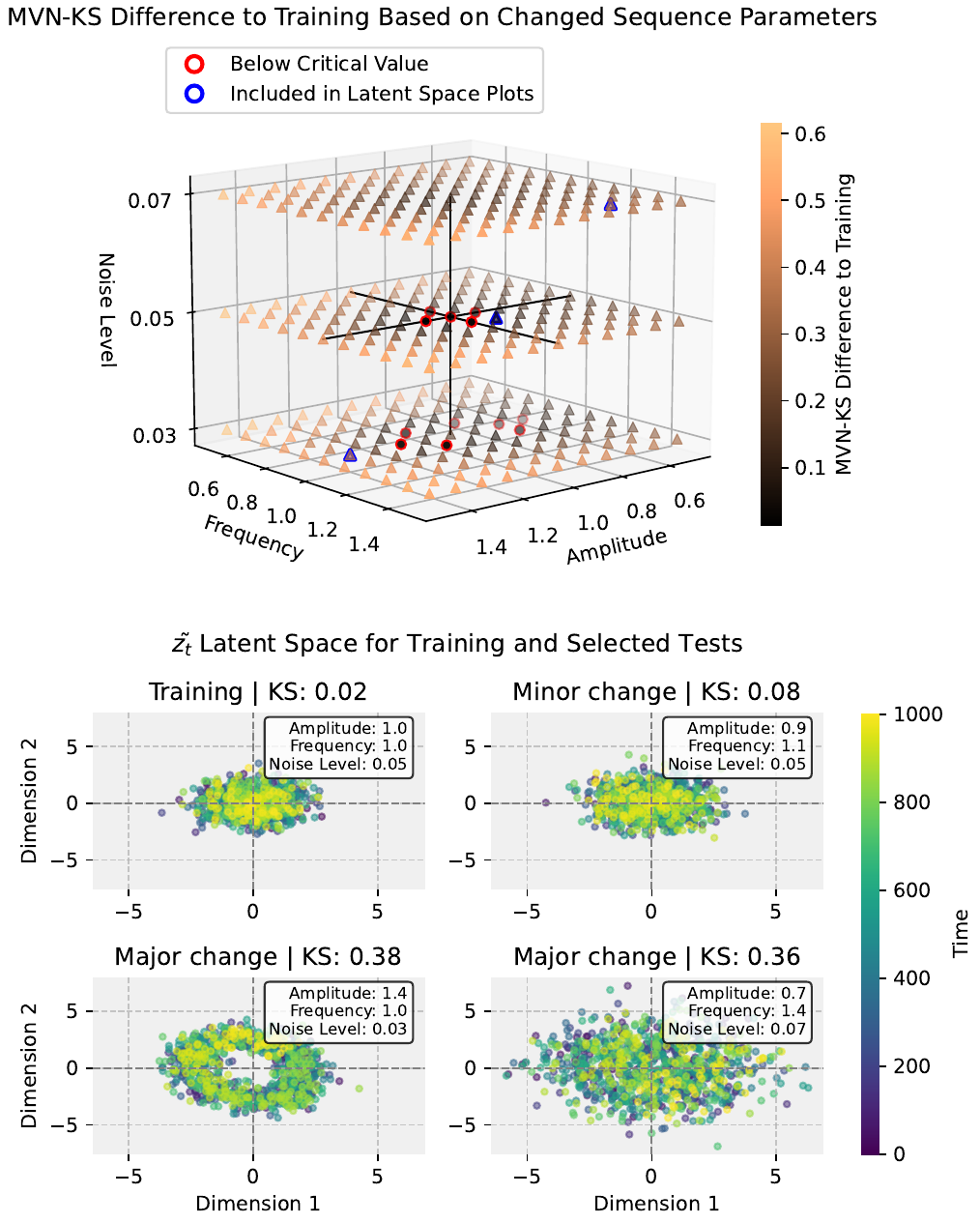}
    \caption{
    Sensitivity of the latent space GOF to input data variations.
    (3D Plot) The test range, where the color gradient encodes the relative difference in MVN-KS values compared to the baseline training sequence at the center. 
    Points with a {\color{red}red} border remain below the critical threshold, despite having higher MVN-KS values than the training baseline.
    (2x2 Scatter Plot) Visualizes the $\wvzt$ latent space for the training sequence and three modified test cases ({\color{blue}blue} borders). 
    Overall, the reported MVN-KS values indicate that greater deviations from the training sequences lead to more pronounced violations of the GOF criteria.
    }
    \label{fig:data_space_test}
\end{figure}

\paragraph{Anomaly Detection via MV-KS tests: the impact of window size.}
A key AD design choice is what window-size to use to determine whether a time-series subsequence is an anomaly or not. 
In our case, these window size is directly related to the statistical power of the GOF test, as it determines the length of the latent trajectory and, hence, the number of samples used in the test:
the MV-KS test depends on the number of samples available to compute its statistics.

As shown in \cref{fig:main_figure}, 
window sizes of $w \leq 20$ are usually too noisy to separate anomalous from compliant distributions in the evaluated 4-dimensional time-series,
while $w \geq 200$ dilutes the identified anomalies, degrading computed metrics.
We report corresponding AD metrics for different window sizes in \cref{tab:synthetic_res_table}.

In general, we recommend a window size of $\bigO{D^3}$,
which matches the empirical sweet spot of $w = 64$ observed in this experiment (see \cref{tab:synthetic_res_table}):
window size $w = 64$ achieves AUC-PR $82.1$ and VUS-PR $96.0$,
in comparison with NLL-based performane of $78.8$ and $92.8$, respectively.
Large gains are also observed in ranged-F1 ($65.5$ vs.\ $36.0$) and affiliation F1 ($92.6$ vs.\ $85.0$) scores.

\paragraph{Anomaly Detection via MV-KS tests: the impact of the LG-LDM inductive bias.}
We here report results for a real-data ablation study to assess the importance of the LG-LDM inductive bias,
as a minimal and interpretable choice.
We observe that the use of such inductive bias improves AD performance over a standard Gaussian $\N{\vzero, \mI}$ latent prior across diverse time-series benchmarks (UCR, TAO, MITDB, Exathlon, Yahoo, Opportunity).

Specifically, LG-LDM yields higher KS VUS-PR ($0.921 \pm 0.092$ vs. $0.836 \pm 0.213$) and slightly better KS VUS-ROC ($0.984 \pm 0.027$ vs. $0.977 \pm 0.032$).
Hence, we conclude that imposing explicit latent dynamics (even if linear and Gaussian) helps the proposed framework better capture temporal compliance, via MV-KS tests on the learned latent space.

\paragraph{Training diagnostics via inductive bias compliance.}
As discussed in Section~\ref{ssec:unsupervised_AD},
we can test inductive bias compliance, and hence good model learning,
by computing the FIT metric, \ie
by MV-KS testing training trajectories mapped onto $\wvzt$.
If for most training subsequences mapped onto $\wvzt$,
a MV-KS score below the critical value is attained,
we conclude that the inductive bias is well-prescribed.
We can therefore be confident that the model is well-trained and is ready for AD detection:
the MV-KS critical value is a valid, unsupervised threshold.

In \cref{fig:fit_vs_nofit_4d}, we present a comparison of 3 CNF variants,
one per column, where we can visually (and through MV-KS scores) assess compliance with inductive bias.
In the left column, we observe the latent state of a well-trained model,
with very low $KS=0.038$ score. 
In contrast, in the other two columns,
we observe how $\wvzt$ trajectories deviate systematically from the (in this case, Gaussian) inductive bias.
Due to excessive CNF capacity, mismatched context length, or data complexity,
training data are not compliant with the imposed dynamics,
invalidating the automatic KS-based threshold.
This training diagnostic,
where we can quantify the fraction of sequences passing the MV-KS test,
is useful for real-world data
---see the \textit{FIT} column in \cref{tab:u_vus-pr_aggregated_comparison}---
as it directly predicts when the trained model and its MV-KS score can be trusted for AD.
Nonetheless, we recall that $\mathrm{FIT}$ assesses statistical validity,
not AD performance.
\begin{figure}[!t]
    \centering
    \includegraphics[width=1\linewidth]{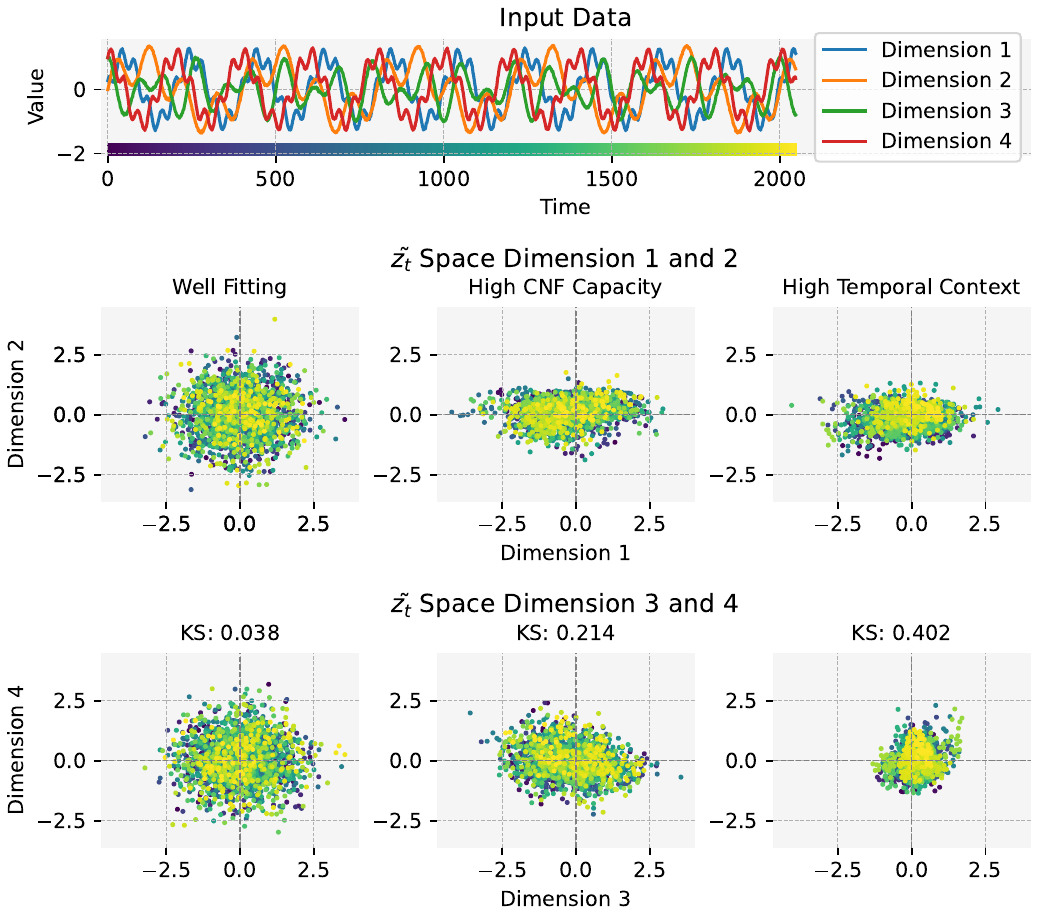}
    \caption{
    Example of 3 different CNF configurations and their impact on the $\wvzt$ latent space. 
    The first column shows a well-fitting and trustworthy model.
    The remaining columns show two models with too many layers or too much temporal context.
    A version including the $\mut$ space is in \cref{axssec:wrong_capacty}.
    }
    \label{fig:fit_vs_nofit_4d}
\end{figure}

\paragraph{Training compliance for unsupervised VS. oracle-based thresholding.}
We assess the value of the above-mentioned training compliance diagnostic
with real-world data (results reported in \cref{axtab:u_parametric_results}).
When compliance to inductive bias is met (see \textit{FIT} column in table),
the label-free, unsupervised MV-KS critical value-based AD 
performs close to the oracle-based, thresholded KS score in the Affiliation F1 metric.
When anomaly labels are available,
the oracle-thresholded NLL score is a strong competitor on the VUS-PR metric.
We acknowledge that KS performs worse than the NLL version on the tested real-world sequences.
We hypothesize that this might be due to the type of anomalies in these datasets,
since NLL scores can have shortcomings as shown in the synthetic experiments.
Additionally, we emphasize that we advocate for the MV-KS score for unsupervised AD,
which does not require labels or thresholding.

\paragraph{AD in real-world benchmark data.}
We report AD performance across a variety of metrics and methods in
univariate (\cref{tab:u_vus-pr_aggregated_comparison}) and multivariate (\cref{tab:m_vus-pr_aggregated_comparison}) time-series from the TSB-AD~\cite{liu2024} benchmark;
with a comparison to its leaderboard in~\cref{tab:tsb-ad_vus-pr_leaderboard}.
Overall, we achieve competitive results,
within the top-10 in the leaderboard when using NLL-scores.
Results in \Cref{tab:tsb-ad_vus-pr_leaderboard} are averaged over all the $1080$ sequences in the TSB-AD benchmark, for a diverse set of sequence types, many with difficult time-series and AD behavior (\eg binary and non-smooth data) for our CNF + LG-LDM system.

On a subset of datasets where per-sequence AD results are available,
we achieve competitive (better, or in par) results than baselines,
even when model learning is not fully compliant with the LG-LDM inductive bias (\cref{tab:u_vus-pr_aggregated_comparison,tab:m_vus-pr_aggregated_comparison}).
When compliance is successful,
\eg on 88\% of sequences in NEK and 100\% in the Stock datasets,
the NLL approach performs in the top-2 (mean VUS-PR: 54),
with MV-KS following close (mean VUS-PR: 44).
In the multivariate setting reported in~\cref{tab:m_vus-pr_aggregated_comparison},
both NLL (mean VUS-PR: 21) and MV-KS (mean VUS-PR: 18) are competitive.
On the contrary, 
the imposed latent dynamics are not compliant after training with the MITDB dataset
and, as predicted, both scoring rules underperform.

\begin{table*}[!ht]
\centering
    \caption{
    Aggregated univariate sequence results on the VUS-PR metric with mean and standard deviation results.
    Boldface numbers indicate best performing model within each dataset, with second-best underlined.
    Comparative results from TSB-AD are included, and a complete table is in the \cref{axssec:univarite_sequence_results}. 
    }
    \label{tab:u_vus-pr_aggregated_comparison}
\resizebox{\textwidth}{!}{
\begin{tabular}{lrrrrrr|rrr}
& & & & & & & \multicolumn{3}{c}{CNF \& LG-LDM} \\
Set & Sub-PCA & CNN & IForest & FITS & AutoEncoder & TimesNet & NLL & KS & FIT \\
\midrule
IOPS & 22.7 ± 20 & 26.0 ± 20 & \underline{27.7 ± 15} & 17.3 ± 15 & 25.5 ± 22 & 22.4 ± 21 & \textbf{33.5 ± 24} & 18.7 ± 13 & 13\% \\
MITDB & \textbf{36.3 ± 32} & 15.3 ± 13 & 9.8 ± ~~6 & 9.0 ± ~~7 & 7.4 ± ~~4 & 8.4 ± ~~5 & \underline{19.3 ± 13} & 16.3 ± ~~9 & 0\% \\
NEK & \textbf{90.5 ± 14} & 73.5 ± 22 & 59.3 ± 14 & 49.4 ± 25 & 50.6 ± 34 & 37.4 ± 26 & \underline{73.7 ± 25} & 70.0 ± 34 & 88\% \\
Stock & 84.0 ± 15 & 92.2 ± 17 & \textbf{99.3 ± ~~1} & 76.3 ± 28 & 72.0 ± 28 & 78.8 ± 27 & \underline{93.9 ± 9} & 81.9 ± 17 & 100\% \\
TODS & 53.9 ± 24 & 54.3 ± 26 & 51.8 ± 30 & 58.1 ± 19 & \textbf{65.2 ± 11} & 58.6 ± 20 & \underline{60.3 ± 25} & 48.5 ± 27 & 85\% \\
\end{tabular}
} 
\end{table*}

\begin{table*}[!ht]
    \centering
    \caption{
    Aggregated multivariate sequences results on the VUS-PR metric with mean and standard deviation results.
    Boldface numbers indicate best performing model within each dataset, with second-best underlined.
    Comparative results from TSB-AD are included, and a complete table is in the \cref{axssec:multivariate_sequence_results}.
    }
    \label{tab:m_vus-pr_aggregated_comparison}
\resizebox{\textwidth}{!}{
\begin{tabular}{lrrrrr|rrr}
& & & & & & \multicolumn{3}{c}{CNF \& LG-LDM} \\
Set & PCA & CNN & FITS & TimesNet & OmniAnomaly & NLL & KS & FIT \\
\midrule
LTDB & 24.4 ± 18 & 32.8 ± 25 & 22.8 ± 18 & 27.2 ± 23 & \textbf{44.4 ± 38} & \underline{35.0 ± 20} & 33.9 ± 21 & 25\% \\
MITDB & 6.5 ± ~~6 & \underline{14.0 ± 16} & 5.3 ± ~~6 & 6.8 ± ~~7 & 11.5 ± 11 & \textbf{15.2 ± 15} & 12.3 ± ~~9 & 18\% \\
\end{tabular}
} 
\end{table*}
\begin{table}[!ht]
    \centering
    \caption{
    Comparison to the TSB-AD leaderboard~\cite{liu2024} (as of 1st of May 2026), aggregated over all sequences on the VUS-PR metric. Ranking based on the average on the univariate sequences. Methods marked with * are pretrained models.
    }
    \label{tab:tsb-ad_vus-pr_leaderboard}
\begin{tabular}{lrr}
Rank \& Method & TSB-AD-U $\downarrow$ & TSB-AD-M \\
\midrule
1. TSPulse (FT)* & 0.55 & 0.39 \\
2. Time-RCD* & 0.52 & 0.32 \\
3. CHARM* & 0.50 & 0.39 \\
4. TSPulse (ZS)* & 0.48 & 0.36 \\ 
5. MMPAD & 0.44 & 0.35 \\
6. PCA & 0.42 & 0.31 \\
\textbf{7. CNF + LG-LDM (NLL)} & 0.41 & 0.35 \\
\textbf{17. CNF + LG-LDM (KS)} & 0.34 & 0.29 \\
\end{tabular}
\end{table}

\section{Discussion}
\label{sec:discussion}

The evaluation of the proposed framework
---with the specific implementation of a CNF with an LG-LDM
that constrains time-series representations to
temporally coherent whitened Gaussian trajectories---
demonstrates 
interpretability and robustness for unsupervised anomaly detection.
As shown in Figures~\ref{fig:main_figure}-\ref{fig:data_space_test},
checking inductive bias compliance (via GOF-tests) of incoming observations,
after mapping them to the latent space via the trained CNF,
provides an interpretable (visual) and robust (MV-KS) method
to identify \textit{unexpected behavior} for it to be flagged as an anomaly.

We emphasize that the proposed method enables AD
even in high-density areas of the trained model (see Figure~\ref{fig:main_figure})
due to our use of
($i$) inductive bias in the latent representations
---that impose time-evolving distributional constraints---
and ($ii$) statistical GOF tests over these latent trajectories.
That is, incoming observations might be mapped to
a clustered or structured latent region of the model,
which are often not directly detectable via NLL-based scoring~\citep{morningstar_density_2021,li_position_2025}.
However, even if they are located in a high-density area of our trained CNF,
it is by checking compliance with the prescribed inductive bias
(the LG-LDM time-varying dynamics in our experiments)
that we can successfully detect \textit{unexpected behavior}.

Dataset and anomaly heterogeneity dictates the performance trade-off between NLL scoring and GOF tests:
NLL perform better on isolated, spike or point anomalies (as in many real-data experiments),
while KS-based tests excel at capturing structural anomalies,
\eg frequency, amplitude, and other regime changes.
The former are specially well-captured in synthetic experiments,
highlighting the benefit of using the proposed MV-KS GOF test.

We acknowledge that highly erratic or flat sequences remain a challenge
for our evaluated CNF + LG-LDM framework,
as these sequence behaviors fundamentally complicate the underlying CNF calibration.
In addition, 
because latent dynamic compliance relies on statistical GOF testing
against the prescribed inductive bias,
its efficacy is inherently dependent on the chosen test statistic and the available sample size.
This dependency is particularly sensitive in high-dimensional spaces,
where statistical tests frequently suffer from low power due to the curse of dimensionality~\citep{bing_high-dimensional_2025,chen_normality_2023,morningstar_density_2021}.
Hence, our approach may require large temporal windows that can smooth over and hide isolated point anomalies. 

To mitigate this effect, we recommend scaling the MV-KS window size by the latent dimension $D$, with a recommended minimum of $\bigO{D^3}$.
A computationally cheaper alternative is to rely on subspace GOF testing.
If the imposed inductive bias allows it
(\eg factorized latent Gaussianity),
one may perform tests on lower-dimensional subspaces of the full latent representations. 
This reduces the sample complexity required for each test,
although it relies on multiple-testing,
and the limitations of compliance checking via marginal distributions.

A final limitation worth discussing is that the proposed framework is highly sensitive to model capacity and to specific dataset behaviors.
For the former, we recommend thorough hyper-parameter searches (see \cref{ssec:setup}).
For the latter, expert knowledge is highly valuable,
both for pre-processing of the input data
and for definition of the inductive bias:
\eg the LG-LDM is not a suitable inductive bias for all real-world datasets.
In any case,
we suggest relying on the monitoring of the training procedure
via GOF test-based inductive bias compliance as described in Section~\ref{ssec:unsupervised_AD}
and showcased empirically in Figure~\ref{fig:fit_vs_nofit_4d}.

\section{Conclusions}
\label{sec:conclusions}

We introduced a state-space, deep generative model with latent-space inductive bias for unsupervised time-series anomaly detection.
Specifically, we proposed a label-free anomaly detector
that scores latent dynamic compliance via GOF tests of representations learned
by a Conditional Normalizing Flow (CNF),
eliminating the need for manual threshold tuning. 

Crucially, our framework provides a practical model learning diagnostic
that builds upon the explicit inductive bias:
we apply a GOF test to the mapped training time-series
to verify whether the imposed inductive bias
is well-prescribed and realized by the learned model.
Consequently, the GOF critical-value decision rule becomes valid and trustworthy for AD deployment.
This principled decision rule remains effective even in regions of high observation likelihood.

Presented results support a broader, unifying perspective on deep generative anomaly detection, demonstrated by experiments on synthetic and real-world time-series. 
Rather than relying on model likelihood as a proxy for expected data behavior,
anomaly detection can be effectively framed as
testing whether learned representations satisfy an explicit,
application-motivated inductive bias over-time.

While the method shows good performance in identifying unexpected behavior without labels,
its limitations regarding adapting the model to the data and the multivariate GOF test's limitations suggest several avenues for further investigation. 
Future work will focus on learning other inductive bias over-time:
\eg learnable LG-LDM covariance matrix, non-Markovian LG-LDMs,
or non-linear models.
 Additional avenues for exploration are
 assessing the impact of MVN-KS window alignment,
 extending the framework to other data types,
 exploring its generative capabilities,
 and developing data-space explanations of unexpected behavior to complement our latent-space diagnostics.

\subsection*{Acknowledgements}
This work has been partly funded by the SFI NorwAI, (Centre for Research-based Innovation, 309834).
B. Baumgartner, thanks the financial support from the Research Council of Norway and the partners of the SFI NorwAI.

E. da Silva. and I. Urteaga thank the support of grant PID2023-146759NA-I00 funded by MICIU/AEI/10.13039/501100011033 and cofunded by the European Union,
as well as the  Basque Government through the BERC 2022-2025 program
and the Ministry of Science and Innovation: BCAM Severo Ochoa accreditation
CEX2021-001142-S/MICIN/AEI/10.13039/501100011033.
I. Urteaga is also partially supported by Grant RYC2023-045922-I funded by MICIU/AEI/10.13039/501100011033 and by ESF+.

E. da Silva’s work is supported by Portuguese national funds through
FCT – Foundation for Science and Technology, I.P.,
within the scope of the research unit
UID/00326 - Centre for Informatics and Systems of the University of Coimbra.

We sincerely appreciate the valuable comments and suggestions from Melissa Yan and Heri Ramampiaro, which helped us improve the quality of the manuscript.

\bibliography{main}

\clearpage
\appendix
\onecolumn

\section{Parametric results for synthetic case}
\label{axsec:synthetic_f1_res_table}

\begin{table}[h]
    \centering
    \caption{Various parametric F1 metric scores based on NLL and MVN-KS w/o critical value (CV) thresholding or the AUC-ROC curve as threshold on different window sizes for the test case in~\cref{fig:main_figure}.
    \Cref{tab:synthetic_res_table} contains the non-parametric scores for NLL and MVN-KS.
    }
    \label{axtab:synthetic_f1_res_table}
\begin{tabular}{lrrrrr}
Score Source & Standard F1 & Point-adjusted F1 & Event-based F1 & Range-based F1 & Affiliation F1 \\
\midrule
NLL & 68.9 & 98.0 & 95.3 & 36.0 & 85.0 \\
KS w=20 & 70.9 & \textbf{100.0} & \textbf{100.0} & 62.2 & 88.1 \\
KS w=20 (CV) & 55.0 & 97.1 & 93.2 & 46.7 & 86.9 \\
KS w=32 & 71.0 & 99.2 & 96.7 & 57.1 & 89.2 \\
KS w=32 (CV) & 57.7 & 95.7 & 90.8 & 48.7 & 87.2 \\
KS w=40 & 72.5 & 98.5 & 95.2 & 51.4 & 92.4 \\
KS w=40 (CV) & 60.1 & 94.9 & 89.8 & 43.7 & 87.7 \\
KS w=64 & 72.0 & 99.5 & 98.3 & 65.5 & \textbf{92.6} \\
KS w=64 (CV) & 66.0 & 95.4 & 91.8 & 63.3 & 91.9 \\
KS w=100 & \textbf{72.6} & 94.6 & 85.8 & 62.1 & 82.5 \\
KS w=100 (CV) & 59.3 & 90.4 & 83.0 & 56.2 & 80.9 \\
KS w=200 & 70.4 & 89.0 & 80.6 & \textbf{73.6} & 85.2 \\
KS w=200 (CV) & 60.1 & 88.2 & 80.2 & 70.6 & 81.8 \\
KS w=400 & 68.8 & 82.1 & 74.1 & 73.2 & 82.0 \\
KS w=400 (CV) & 60.8 & 60.8 & 60.1 & 35.5 & 64.9 \\
\end{tabular}
\end{table}

\clearpage

\section{Results per sequence and aggregated}
\label{axsec:results_per_sequence}

\subsection{Additional Univariate Sequences Results}
\label{axssec:univarite_sequence_results}

\begin{table}[h]
    \centering
    \caption{
    Per-sequence univariate sequence results for the VUS-PR metric. 
    It also includes in the FIT column whether the selected model meets the trustworthiness criteria.
    }
    \label{axtab:u_vus-pr_full}
    \footnotesize 
\begin{tabular}{lrrrrrr|rrr}
& & & & & & & \multicolumn{3}{c}{CNF \& LG-LDM} \\
Sequence & Sub-PCA & CNN & IForest & FITS & AutoEncoder & TimesNet & NLL & KS & FIT \\
\midrule
Stock, Id: 1 & 86.07 & \textbf{99.83} & \underline{99.5} & 82.19 & 74.11 & 85.54 & 96.95 & 89.64 & True \\
Stock, Id: 3 & 85.97 & 91.85 & \textbf{99.5} & 85.38 & 65.43 & 85.97 & \underline{94.29} & 85.12 & True \\
Stock, Id: 7 & 92.57 & 96.34 & \underline{99.7} & 91.32 & 85.91 & 93.02 & \textbf{99.72} & 91.56 & True \\
Stock, Id: 8 & 50.48 & 50.47 & \textbf{98.3} & 13.63 & 12.41 & 15.62 & \underline{71.19} & 45.12 & True \\
Stock, Id: 12 & 75.48 & \textbf{99.57} & \underline{98.99} & 61.46 & 57.58 & 72.68 & 93.35 & 70.96 & True \\
Stock, Id: 14 & 94.99 & \underline{99.73} & \textbf{99.84} & 94.07 & 94.96 & 94.26 & 98.99 & 93.71 & True \\
Stock, Id: 17 & 97.29 & \textbf{99.93} & 99.82 & 96.97 & 97.46 & 97.10 & \underline{99.9} & 97.53 & True \\
Stock, Id: 18 & 88.96 & \textbf{99.92} & \underline{98.77} & 85.06 & 87.99 & 86.08 & 96.44 & 81.92 & True \\
MITDB, Id: 1 & 8.85 & 6.57 & 6.74 & 6.24 & 5.35 & 6.70 & \underline{9.12} & \textbf{10.96} & False \\
MITDB, Id: 2 & \textbf{72.87} & 39.63 & 15.25 & 9.94 & 8.64 & 11.65 & \underline{41.32} & 24.29 & False \\
MITDB, Id: 3 & 1.84 & 7.66 & 2.55 & 2.40 & 3.38 & 2.39 & \textbf{15.17} & \underline{12.54} & False \\
MITDB, Id: 4 & \textbf{19.66} & \underline{17.82} & 16.60 & 11.43 & 10.45 & 11.88 & 16.41 & 17.05 & False \\
MITDB, Id: 6 & \textbf{25.14} & 3.79 & 3.89 & 3.63 & 3.68 & 3.45 & \underline{5.98} & 5.63 & False \\
MITDB, Id: 7 & \textbf{86.16} & 23.64 & 16.48 & 23.00 & 15.23 & 15.97 & \underline{32.79} & 31.26 & False \\
MITDB, Id: 8 & \textbf{39.43} & 7.76 & 7.04 & 6.43 & 5.22 & 6.49 & \underline{14.31} & 12.45 & False \\
IOPS, Id: 1 & \underline{22.62} & 12.09 & 16.49 & 2.15 & 19.90 & 20.31 & 14.47 & \textbf{22.82} & False \\
IOPS, Id: 2 & 1.18 & \underline{15.62} & 10.76 & 8.07 & \textbf{40.27} & 12.61 & 8.66 & 2.81 & False \\
IOPS, Id: 3 & 25.23 & 15.45 & \underline{30.6} & 10.25 & 2.92 & 11.97 & \textbf{34.03} & 8.25 & False \\
IOPS, Id: 4 & 10.61 & \textbf{67.83} & 13.42 & 46.94 & \underline{58.14} & 41.61 & 35.45 & 10.01 & False \\
IOPS, Id: 5 & 4.43 & 5.27 & 3.11 & 7.60 & \textbf{77.97} & 4.09 & 5.27 & \underline{16.68} & False \\
IOPS, Id: 7 & \textbf{62.01} & 4.35 & \underline{51.42} & 2.70 & 4.78 & 5.75 & 37.58 & 15.88 & False \\
IOPS, Id: 8 & 20.65 & \underline{37.96} & \textbf{39.0} & 23.12 & 7.08 & 17.54 & 17.17 & 6.73 & False \\
IOPS, Id: 9 & 23.63 & 16.59 & \underline{31.36} & 17.92 & 13.44 & 15.94 & \textbf{52.08} & 20.37 & False \\
IOPS, Id: 10 & \textbf{72.2} & 38.45 & 43.96 & 4.24 & 3.05 & 9.32 & \underline{70.03} & 41.79 & True \\
IOPS, Id: 12 & 14.59 & 15.36 & \textbf{27.12} & 17.45 & \underline{18.63} & 14.33 & 8.65 & 10.91 & True \\
IOPS, Id: 13 & 15.77 & \underline{69.18} & 22.15 & 48.85 & 53.62 & \textbf{87.64} & 52.19 & 20.24 & False \\
IOPS, Id: 14 & 4.42 & \underline{23.3} & 9.59 & 7.26 & 13.55 & \textbf{33.09} & 5.51 & 2.08 & False \\
IOPS, Id: 15 & 21.17 & 32.11 & 34.96 & \underline{35.61} & 26.61 & 33.94 & \textbf{54.78} & 22.47 & False \\
IOPS, Id: 16 & 9.47 & 16.34 & \textbf{32.76} & 12.32 & 24.65 & 13.31 & 30.03 & \underline{31.36} & False \\
IOPS, Id: 17 & 32.82 & 20.15 & \underline{48.94} & 14.55 & 17.43 & 14.39 & \textbf{77.34} & 48.26 & False \\
NEK, Id: 1 & 88.42 & 93.78 & 66.13 & 92.04 & \underline{96.01} & \textbf{97.55} & 70.12 & 6.32 & False \\
NEK, Id: 2 & \textbf{89.11} & 34.11 & 31.82 & 27.03 & 6.73 & 22.99 & 32.69 & \underline{38.31} & True \\
NEK, Id: 3 & \textbf{99.26} & 86.19 & 64.54 & 29.60 & 96.42 & 34.13 & \underline{98.73} & 90.44 & True \\
NEK, Id: 5 & \textbf{99.56} & 91.16 & 70.27 & 54.63 & 64.30 & 41.77 & 62.23 & \underline{98.17} & True \\
NEK, Id: 6 & \textbf{57.77} & 46.52 & 42.18 & 16.13 & 9.38 & 16.50 & 44.07 & \underline{50.94} & True \\
NEK, Id: 7 & \textbf{97.52} & 78.00 & 62.29 & 61.72 & 47.72 & 31.07 & \underline{94.97} & 90.38 & True \\
NEK, Id: 8 & \underline{96.9} & 74.38 & 71.07 & 66.65 & 36.53 & 32.34 & 91.88 & \textbf{97.72} & True \\
NEK, Id: 9 & \textbf{95.56} & 83.48 & 66.06 & 47.29 & 47.86 & 23.04 & \underline{94.62} & 87.58 & True \\
TODS, Id: 1 & 87.26 & \underline{89.49} & \textbf{96.67} & 82.35 & 87.57 & 82.54 & 88.15 & 88.09 & True \\
TODS, Id: 2 & 67.51 & \textbf{78.47} & 68.49 & 61.80 & 60.51 & 60.74 & \underline{71.35} & 27.28 & True \\
TODS, Id: 3 & 59.33 & 60.45 & 60.56 & 66.97 & 67.04 & 66.17 & \textbf{94.43} & \underline{85.01} & True \\
TODS, Id: 4 & 46.80 & 53.40 & 53.50 & \textbf{72.66} & 68.55 & 55.56 & \underline{69.22} & 36.46 & True \\
TODS, Id: 6 & 83.83 & 83.82 & \textbf{90.59} & 80.36 & 83.81 & 82.23 & \underline{89.01} & 86.91 & True \\
TODS, Id: 7 & 52.44 & 32.89 & 17.88 & 30.74 & 58.24 & 39.60 & \textbf{84.51} & \underline{75.74} & True \\
TODS, Id: 8 & 49.97 & 26.39 & 20.17 & 37.09 & \textbf{59.66} & 36.91 & \underline{59.08} & 50.10 & True \\
TODS, Id: 9 & 15.87 & 17.12 & 15.04 & 26.64 & \textbf{60.27} & 21.88 & 29.06 & \underline{41.48} & True \\
TODS, Id: 10 & 75.82 & \textbf{83.43} & \underline{81.95} & 72.85 & 70.64 & 76.84 & 27.34 & 13.99 & True \\
TODS, Id: 11 & 27.71 & 42.84 & 26.33 & 35.94 & 47.24 & \underline{65.87} & \textbf{67.18} & 47.15 & True \\
TODS, Id: 12 & 55.80 & 52.24 & 57.70 & 54.78 & \underline{67.23} & \textbf{67.94} & 43.93 & 19.81 & False \\
TODS, Id: 13 & 65.30 & 68.21 & \underline{70.95} & 65.95 & 64.75 & \textbf{72.67} & 30.95 & 19.67 & False \\
TODS, Id: 15 & 13.69 & 16.57 & 13.75 & \textbf{67.65} & \underline{52.13} & 32.98 & 29.22 & 38.97 & True \\
\end{tabular}
\end{table}

\begin{table}[h]
    \centering
    \caption{Aggregated results on non-parametric metrics over the chosen univariate sequences with the proposed CNF \& LG-LDM method. 
    }
    \label{axtab:u_non_parametic_results}
\begin{tabular}{lr|rrrrrrrrrrrrrrrrrrrrrrrrrrr}
& & \multicolumn{2}{c}{AUC-PR} & \multicolumn{2}{c}{AUC-ROC} & \multicolumn{2}{c}{VUS-PR} & \multicolumn{2}{c}{VUS-ROC} \\
Set & FIT & NLL & KS & NLL & KS & NLL & KS & NLL & KS \\
\midrule
IOPS & 13\% & 36 ± 22 & 19 ± 21 & 81 ± 12 & 74 ± 16 & 34 ± 24 & 15 ± 12 & 92 ± ~~8 & 77 ± 11 \\
MITDB & 0\% & 19 ± 14 & 15 ± ~~8 & 69 ± ~~7 & 71 ± ~~7 & 19 ± 13 & 16 ± ~~9 & 73 ± ~~7 & 73 ± ~~7 \\
NEK & 88\% & 61 ± 29 & 63 ± 36 & 89 ± 12 & 84 ± 17 & 74 ± 25 & 74 ± 29 & 96 ± ~~5 & 91 ± 10 \\
Stock & 100\% & 83 ± 14 & 8 ± ~~5 & 98 ± ~~1 & 51 ± ~~1 & 94 ± ~~9 & 81 ± 17 & 99 ± ~~1 & 92 ± ~~7 \\
TODS & 85\% & 39 ± 22 & 17 ± 15 & 77 ± 12 & 64 ± 11 & 60 ± 25 & 48 ± 27 & 89 ± ~~9 & 75 ± 16 \\
\end{tabular}
\end{table}

\begin{table}[h]
    \centering
    \caption{Aggregated results on parametric metrics over the chosen univariate sequences, except for the CV, where the critical value is used as the threshold with the proposed CNF \& LG-LDM method. 
    }
    \label{axtab:u_parametric_results}
\resizebox{\textwidth}{!}{
\begin{tabular}{l|rr|rr|rr|rr|rrr}
& \multicolumn{2}{c|}{Standard F1} & \multicolumn{2}{c|}{Point-adjusted F1} & \multicolumn{2}{c|}{Event-based F1} & \multicolumn{2}{c|}{Range-based F1} & \multicolumn{3}{c}{Affiliation F1} \\
Set &  NLL & KS & NLL & KS & NLL & KS & NLL & KS & NLL & KS & CV \\
\midrule
IOPS & 42 ± 20 & 25 ± 25 & 80 ± 22 & 46 ± 39 & 72 ± 23 & 19 ± 21 & 38 ± 13 & 14 ± ~~9 & 84 ± 12 & 75 ± ~~9 & 69 ± ~~2 \\
MITDB & 25 ± 13 & 23 ± 10 & 88 ± 14 & 80 ± 15 & 56 ± 32 & 45 ± 20 & 23 ± ~~8 & 23 ± ~~7 & 86 ± 10 & 84 ± ~~6 & 71 ± ~~2 \\
NEK & 65 ± 24 & 67 ± 30 & 92 ± 12 & 80 ± 26 & 86 ± 19 & 71 ± 35 & 45 ± 16 & 68 ± 20 & 95 ± 9 & 90 ± 12 & 79 ± 10 \\
Stock & 75 ± 15 & 15 ± ~~8 & 76 ± 15 & 15 ± ~~8 & 75 ± 15 & 15 ± ~~8 & 77 ± 12 & 38 ± 18 & 89 ± ~~6 & 68 ± ~~1 & 66 ± ~~1 \\
TODS & 44 ± 19 & 26 ± 18 & 64 ± 20 & 39 ± 28 & 50 ± 22 & 20 ± 10 & 56 ± 21 & 34 ± 18 & 77 ± ~~7 & 69 ± ~~3 & 63 ± ~~8 \\
\end{tabular}
} 
\end{table}

\clearpage

\subsection{Additional Multivariate Sequences Results}
\label{axssec:multivariate_sequence_results}

\begin{table}[h]
    \centering
    \caption{
    Per-sequence multivariate sequence results for the VUS-PR metric. 
    It also includes in the FIT column whether the selected model meets the trustworthiness criteria.
    }
    \label{axtab:m_vus-pr_full}
\begin{tabular}{lrrrrr|rrr}
& & & & & & \multicolumn{3}{c}{CNF \& LG-LDM} \\
Sequence & PCA & CNN & FITS & TimesNet & OmniAnomaly & NLL & KS & FIT \\
\midrule
MITDB, Id: 1 & 17.62 & \textbf{27.78} & 19.17 & 20.00 & 21.97 & \underline{25.66} & 20.47 & False \\
MITDB, Id: 2 & 0.41 & \underline{1.32} & 0.25 & 0.50 & \textbf{1.7} & 1.12 & 0.90 & False \\
MITDB, Id: 3 & 2.04 & 6.66 & 2.55 & 1.58 & 1.41 & \textbf{32.49} & \underline{21.28} & False \\
MITDB, Id: 4 & 4.12 & \textbf{8.35} & 4.55 & 5.48 & 4.75 & \underline{7.08} & 6.89 & False \\
MITDB, Id: 6 & 1.20 & 3.05 & 0.99 & 1.57 & \underline{10.63} & 3.85 & \textbf{19.66} & False \\
MITDB, Id: 7 & 13.90 & \textbf{55.78} & 9.83 & 20.92 & 32.50 & \underline{48.64} & 26.29 & False \\
MITDB, Id: 8 & 10.15 & \underline{23.3} & 8.95 & 7.50 & \textbf{24.24} & 18.62 & 17.93 & False \\
MITDB, Id: 9 & 1.08 & 2.15 & 0.98 & 1.44 & 1.26 & \textbf{7.14} & \underline{5.99} & True \\
MITDB, Id: 11 & 11.87 & \underline{14.68} & 2.26 & 5.24 & \textbf{18.73} & 13.40 & 3.69 & False \\
MITDB, Id: 12 & 6.60 & \underline{8.2} & 6.43 & 7.46 & 6.10 & 7.04 & \textbf{9.04} & True \\
MITDB, Id: 13 & 2.80 & \textbf{3.27} & 2.80 & 3.00 & 2.79 & 2.59 & \underline{3.16} & False \\
LTDB, Id: 1 & 31.84 & \underline{60.82} & 33.76 & 40.35 & \textbf{84.75} & 58.98 & 54.64 & False \\
LTDB, Id: 2 & 19.63 & 18.97 & 16.68 & 16.39 & 17.11 & \underline{33.53} & \textbf{36.34} & False \\
LTDB, Id: 3 & 44.80 & 46.49 & 40.24 & \underline{51.46} & \textbf{68.05} & 37.39 & 39.53 & True \\
LTDB, Id: 5 & 1.39 & 4.99 & 0.66 & 0.68 & \underline{7.61} & \textbf{10.04} & 5.10 & False \\
\end{tabular}
\end{table}

\begin{table}[h]
    \centering
    \caption{Aggregated results on non-parametric metrics over the chosen multivariate sequences with the proposed CNF \& LG-LDM method.
    }
    \label{axtab:m_non-parametric_results}
\begin{tabular}{lr|rrrrrrrrrrrrrrrrrrrrrrrrrrr}
& & \multicolumn{2}{c}{AUC-PR} & \multicolumn{2}{c}{AUC-ROC} & \multicolumn{2}{c}{VUS-PR} & \multicolumn{2}{c}{VUS-ROC} \\
Set & FIT & NLL & KS & NLL & KS & NLL & KS & NLL & KS \\
\midrule
LTDB & 25\% & 25 ± 14 & 21 ± 13 & 62 ± 16 & 62 ± 15 & 35 ± 20 & 32 ± 19 & 76 ± 14 & 74 ± 14 \\
MITDB & 18\% & 19 ± 15 & 12 ± 9 & 69 ± 11 & 72 ± 12 & 15 ± 15 & 12 ± 8 & 72 ± 11 & 74 ± 12 \\
\end{tabular}
\end{table}

\begin{table}[h]
    \centering
    \caption{Aggregated results on parametric metrics over the chosen multivariate sequences, except for the CV, where the critical value is used as the threshold with the proposed CNF \& LG-LDM method.
    }
    \label{axtab:m_parametric_results}
\resizebox{\textwidth}{!}{
\begin{tabular}{l|rr|rr|rr|rr|rrr}
& \multicolumn{2}{c|}{Standard F1} & \multicolumn{2}{c|}{Point-adjusted F1} & \multicolumn{2}{c|}{Event-based F1} & \multicolumn{2}{c|}{Range-based F1} & \multicolumn{3}{c}{Affiliation F1} \\
Set &  NLL & KS & NLL & KS & NLL & KS & NLL & KS & NLL & KS & CV \\
\midrule
LTDB & 34 ± 12 & 34 ± 15 & 77 ± 32 & 76 ± 11 & 57 ± 27 & 48 ± 16 & 22 ± 11 & 42 ± 24 & 79 ± 8 & 77 ± 7 & 73 ± 4 \\
MITDB & 26 ± 16 & 21 ± 11 & 87 ± 26 & 71 ± 28 & 77 ± 32 & 38 ± 33 & 28 ± 12 & 23 ± 15 & 91 ± 11 & 86 ± 9 & 73 ± 6 \\
\end{tabular}
} 
\end{table}

\clearpage

\subsection{Extended framework capacity issue visualization}
\label{axssec:wrong_capacty}

\begin{figure}[h!]
    \centering
    \includegraphics[width=0.8\linewidth]{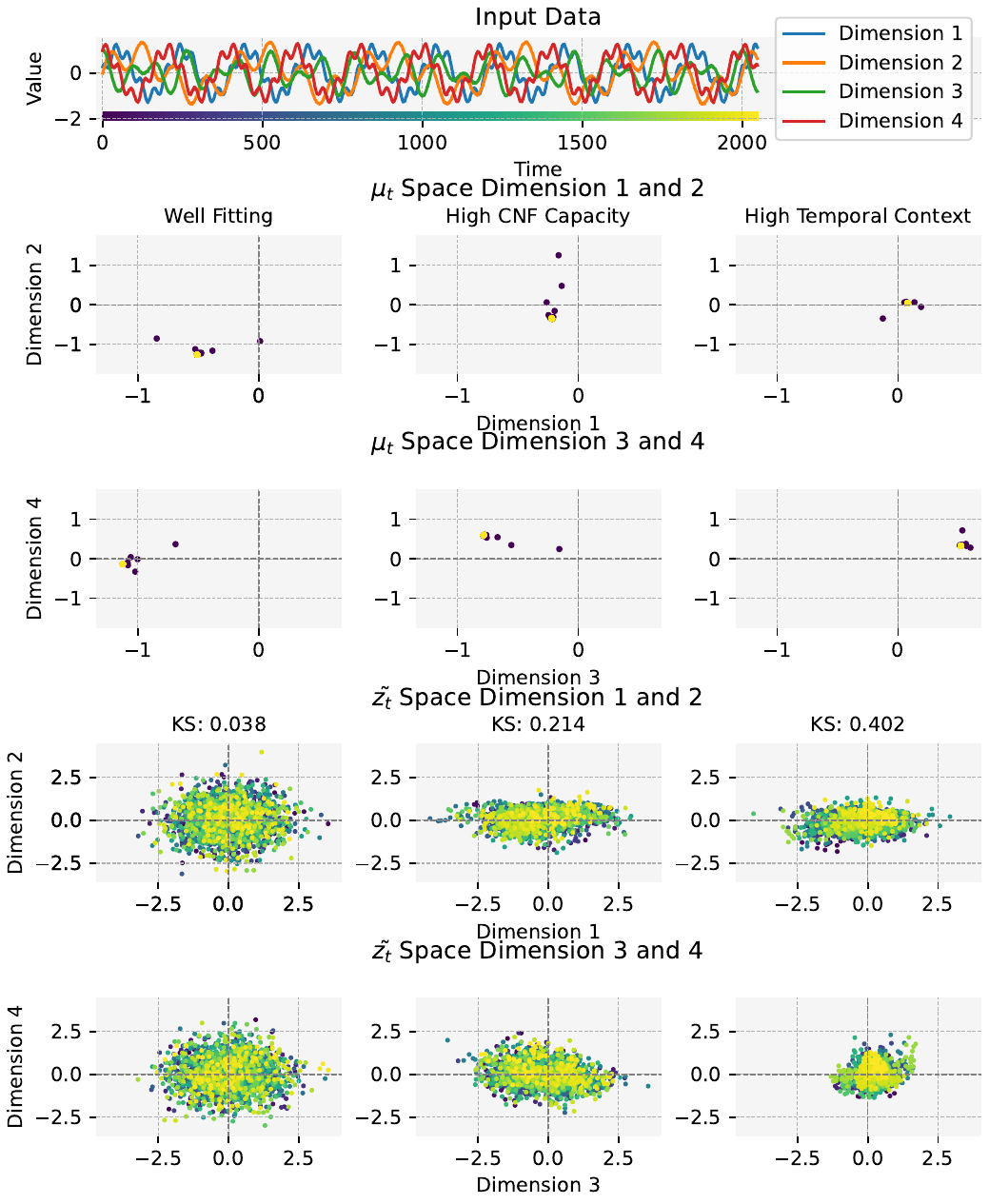}
    \caption{
    Example of 3 different CNF configurations and their impact on the $\wvzt$ latent space. 
    Row one contains the training sequence.
    Rows 2 and 3 contain the LG-LDM across all 4 dimensions.
    Rows 4 and 5 contain the $\wvzt$ space for all 4 dimensions.
    The first column shows an expected fit of a Gaussian behavior. 
    The other two columns contain CNF results with too many layers or too much temporal context.
    Row 4 includes in the title each model's MVN-KS value.
    }
    \label{axfig:fit_vs_nofit_4d}
\end{figure}

\clearpage
\section{Full TSB-AD Tables}

\begin{table}[t]
\centering
\caption{Grouped Table with comparison to TSB-AD: Univariate + VUS-PR}

\footnotesize 
\begin{tabular}{lrrrrrr|rrr}
Set & Sub-PCA & CNN & IForest & FITS & AutoEncoder & TimesNet & CNF-NLL & CNF-KS-V & FIT \\
\midrule
CATSv2 & \underline{26.5} & \textbf{31.6} & 7.9 & 17.1 & 17.8 & 10.4 & 15.9 & 20.1 & 0\% \\
Daphnet & \underline{42.2} & 39.5 & 36.4 & \textbf{42.7} & 9.5 & 39.0 & 26.6 & 17.4 & 0\% \\
Exathlon & \textbf{92.7 ± 10} & 60.5 ± 21 & 66.7 ± 13 & 54.6 ± 24 & 36.0 ± 35 & 52.9 ± 21 & 84.7 ± 17 & \underline{90.2 ± 16} & 0\% \\
IOPS & 22.7 ± 20 & \underline{26.0 ± 20} & 27.7 ± 15 & 17.3 ± 15 & 25.5 ± 22 & 22.4 ± 21 & \textbf{35.6 ± 24} & 18.1 ± 12 & 20\% \\
LTDB & \underline{55.6 ± 33} & 41.7 ± 15 & 33.8 ± 12 & 33.6 ± 16 & \textbf{69.2 ± 27} & 29.4 ± 13 & 49.3 ± 14 & 47.0 ± 10 & 62\% \\
MGAB & 0.5 ± 0 & 0.6 ± 0 & 0.4 ± 0 & 0.4 ± 0 & 0.6 ± 0 & 0.4 ± 0 & \textbf{4.5 ± 8} & \underline{4.0 ± 6} & 12\% \\
MITDB & \textbf{36.3 ± 32} & 15.3 ± 13 & 9.8 ± 6 & 9.0 ± 7 & 7.4 ± 4 & 8.4 ± 5 & \underline{19.1 ± 15} & 16.7 ± 12 & 14\% \\
MSL & \textbf{43.4 ± 41} & 32.3 ± 25 & 32.3 ± 19 & \underline{38.0 ± 29} & 18.4 ± 12 & 31.1 ± 17 & 25.2 ± 16 & 19.9 ± 16 & 50\% \\
NAB & \textbf{43.6 ± 30} & 19.1 ± 11 & 22.3 ± 12 & 24.4 ± 15 & 31.7 ± 19 & 20.3 ± 11 & 30.2 ± 18 & \underline{37.9 ± 22} & 61\% \\
NEK & \textbf{90.8 ± 15} & 70.5 ± 22 & 58.3 ± 15 & 43.3 ± 19 & 44.1 ± 31 & 28.8 ± 8 & 72.4 ± 27 & \underline{75.0 ± 30} & 100\% \\
OPPORTUNITY & \textbf{91.4 ± 21} & 40.2 ± 37 & 43.4 ± 19 & 7.1 ± 5 & 11.7 ± 12 & 5.3 ± 3 & 55.3 ± 37 & \underline{57.8 ± 31} & 19\% \\
Power & 7.6 & 7.8 & 8.2 & 7.2 & 8.8 & 7.5 & \underline{15.6} & \textbf{20.0} & 0\% \\
SED & 3.1 ± 1 & 6.2 ± 1 & \underline{35.8 ± 3} & 5.2 ± 2 & \textbf{41.1 ± 10} & 5.1 ± 1 & 7.6 ± 0 & 9.7 ± 2 & 0\% \\
SMAP & \underline{47.6 ± 38} & 34.4 ± 30 & 27.5 ± 30 & \textbf{47.8 ± 36} & 43.2 ± 41 & 36.8 ± 34 & 23.3 ± 27 & 31.0 ± 33 & 7\% \\
SMD & 46.2 ± 33 & \textbf{56.1 ± 16} & 35.2 ± 28 & 54.9 ± 16 & 14.1 ± 16 & \underline{55.2 ± 20} & 45.3 ± 22 & 18.0 ± 21 & 0\% \\
SVDB & \textbf{52.3 ± 25} & 22.1 ± 14 & 9.4 ± 7 & 10.1 ± 10 & 32.4 ± 19 & 9.7 ± 8 & 29.1 ± 16 & \underline{25.5 ± 14} & 6\% \\
SWaT & 39.3 & \textbf{68.2} & \underline{49.7} & 10.1 & 37.8 & 10.9 & 40.3 & 27.8 & 0\% \\
Stock & 84.0 ± 15 & 92.2 ± 17 & \textbf{99.3 ± 1} & 76.3 ± 28 & 72.0 ± 28 & 78.8 ± 27 & \underline{93.9 ± 8} & 82.2 ± 18 & 88\% \\
TAO & 93.0 ± 3 & \textbf{100.0 ± 0} & \underline{98.5 ± 0} & 91.1 ± 4 & 93.0 ± 3 & 90.9 ± 4 & 95.1 ± 5 & 91.2 ± 3 & 50\% \\
TODS & 53.9 ± 24 & 54.3 ± 26 & 51.8 ± 30 & 58.1 ± 19 & \textbf{65.2 ± 11} & 58.6 ± 20 & \underline{59.3 ± 28} & 48.7 ± 26 & 92\% \\
UCR & \underline{11.9 ± 24} & 4.8 ± 11 & 2.3 ± 4 & 2.0 ± 3 & 9.3 ± 17 & 2.3 ± 5 & 11.1 ± 17 & \textbf{13.1 ± 24} & 26\% \\
WSD & 9.4 ± 13 & \underline{24.5 ± 29} & 14.4 ± 20 & 14.5 ± 19 & 14.1 ± 22 & \textbf{26.6 ± 29} & 20.5 ± 23 & 5.8 ± 7 & 20\% \\
YAHOO & 13.7 ± 25 & \underline{53.0 ± 43} & 43.5 ± 43 & 18.5 ± 20 & 28.7 ± 33 & 29.0 ± 28 & \textbf{73.3 ± 40} & 26.1 ± 33 & 100\% \\
\end{tabular}
\end{table}

\begin{table}
\centering
\caption{Aggregated CNF Metric Table over all Results in Set: Univariate}

\scriptsize 
\begin{tabular}{lr|rr|rr|rr|rr}
Set & FIT & NL AUC-PR & KS AUC-PR & NL AUC-ROC & KS AUC-ROC & NL VUS-PR & KS VUS-PR & NL VUS-ROC & KS VUS-ROC \\
\midrule
CATSv2 & 0\% & 18 & 23 & 63 & 73 & 16 & 20 & 66 & 72 \\
Daphnet & 0\% & 23 & 15 & 89 & 83 & 27 & 17 & 91 & 83 \\
Exathlon & 0\% & 85 ± 17 & 89 ± 17 & 97 ± 5 & 98 ± 2 & 86 ± 17 & 91 ± 16 & 97 ± 5 & 99 ± 2 \\
IOPS & 18\% & 38 ± 25 & 16 ± 20 & 82 ± 14 & 68 ± 19 & 32 ± 25 & 14 ± 12 & 92 ± 8 & 73 ± 17 \\
LTDB & 67\% & 33 ± 14 & 30 ± 11 & 62 ± 14 & 63 ± 14 & 47 ± 14 & 45 ± 11 & 77 ± 11 & 76 ± 12 \\
MGAB & 11\% & 1 ± 2 & 0 ± 0 & 55 ± 7 & 57 ± 5 & 4 ± 8 & 3 ± 5 & 83 ± 5 & 83 ± 4 \\
MITDB & 14\% & 19 ± 15 & 15 ± 11 & 69 ± 8 & 71 ± 7 & 19 ± 15 & 16 ± 12 & 72 ± 9 & 74 ± 7 \\
MSL & 50\% & 20 ± 15 & 12 ± 10 & 67 ± 11 & 62 ± 11 & 25 ± 16 & 15 ± 10 & 74 ± 13 & 69 ± 11 \\
NAB & 57\% & 25 ± 16 & 32 ± 22 & 61 ± 14 & 69 ± 15 & 29 ± 17 & 36 ± 22 & 68 ± 14 & 73 ± 15 \\
NEK & 89\% & 56 ± 33 & 57 ± 37 & 83 ± 14 & 84 ± 13 & 67 ± 29 & 64 ± 33 & 93 ± 6 & 88 ± 11 \\
OPPORTUNITY & 24\% & 53 ± 37 & 55 ± 31 & 87 ± 20 & 87 ± 18 & 56 ± 38 & 56 ± 32 & 88 ± 19 & 88 ± 17 \\
Power & 0\% & 14 & 22 & 61 & 68 & 16 & 20 & 65 & 71 \\
SED & 0\% & 6 ± 5 & 7 ± 3 & 54 ± 15 & 60 ± 13 & 10 ± 5 & 12 ± 4 & 70 ± 12 & 74 ± 9 \\
SMAP & 6\% & 24 ± 28 & 25 ± 31 & 75 ± 10 & 72 ± 15 & 26 ± 28 & 29 ± 32 & 81 ± 11 & 80 ± 14 \\
SMD & 0\% & 47 ± 20 & 15 ± 20 & 89 ± 8 & 66 ± 18 & 44 ± 21 & 15 ± 20 & 93 ± 7 & 72 ± 17 \\
SVDB & 11\% & 24 ± 13 & 22 ± 17 & 78 ± 8 & 77 ± 10 & 28 ± 16 & 24 ± 14 & 85 ± 7 & 83 ± 9 \\
SWaT & 0\% & 74 & 71 & 83 & 81 & 40 & 27 & 69 & 56 \\
Stock & 95\% & 83 ± 9 & 10 ± 5 & 97 ± 2 & 51 ± 1 & 94 ± 7 & 79 ± 22 & 99 ± 1 & 91 ± 10 \\
TAO & 67\% & 58 ± 28 & 10 ± 4 & 90 ± 6 & 52 ± 1 & 95 ± 4 & 91 ± 2 & 99 ± 1 & 96 ± 1 \\
TODS & 93\% & 45 ± 25 & 22 ± 21 & 77 ± 13 & 66 ± 14 & 63 ± 28 & 52 ± 26 & 90 ± 9 & 78 ± 16 \\
UCR & 15\% & 11 ± 17 & 8 ± 16 & 72 ± 17 & 76 ± 17 & 11 ± 16 & 10 ± 17 & 81 ± 14 & 81 ± 14 \\
WSD & 15\% & 30 ± 23 & 4 ± 11 & 75 ± 15 & 62 ± 17 & 21 ± 22 & 5 ± 8 & 90 ± 10 & 72 ± 14 \\
YAHOO & 98\% & 60 ± 43 & 3 ± 14 & 88 ± 18 & 62 ± 15 & 61 ± 33 & 27 ± 23 & 94 ± 11 & 77 ± 14 \\
\end{tabular}
\end{table}

\begin{table}
\centering
\caption{Aggregated CNF Metric Table over all Results in Set: Univariate}

\scriptsize 
\begin{tabular}{lrrrrrrrrrr}
Set & NL-F1 & KS-F1 & NL PA-F1 & KS PA-F1 & NL Event-F1 & KS Event-F1 & NL R-F1 & KS R-F1 & NL Affi-F1 & KS Affi-F1 \\
\midrule
CATSv2 & 19 & 34 & 53 & 86 & 19 & 56 & 4 & 24 & 68 & 78 \\
Daphnet & 36 & 31 & 61 & 57 & 11 & 31 & 7 & 47 & 75 & 82 \\
Exathlon & 86 ± 13 & 92 ± 11 & 99 ± 3 & 98 ± 9 & 97 ± 8 & 96 ± 10 & 41 ± 17 & 90 ± 15 & 98 ± 5 & 99 ± 2 \\
IOPS & 43 ± 24 & 22 ± 25 & 75 ± 28 & 41 ± 40 & 70 ± 29 & 17 ± 20 & 40 ± 16 & 14 ± 10 & 86 ± 12 & 72 ± 7 \\
LTDB & 39 ± 10 & 40 ± 10 & 77 ± 27 & 74 ± 23 & 59 ± 30 & 53 ± 24 & 26 ± 8 & 44 ± 11 & 79 ± 9 & 77 ± 8 \\
MGAB & 5 ± 5 & 1 ± 1 & 35 ± 20 & 9 ± 7 & 19 ± 17 & 2 ± 2 & 9 ± 8 & 4 ± 5 & 76 ± 8 & 76 ± 8 \\
MITDB & 24 ± 14 & 23 ± 10 & 84 ± 19 & 80 ± 14 & 53 ± 27 & 43 ± 23 & 21 ± 9 & 26 ± 12 & 81 ± 10 & 84 ± 8 \\
MSL & 29 ± 16 & 21 ± 10 & 88 ± 22 & 47 ± 26 & 60 ± 39 & 25 ± 17 & 33 ± 14 & 37 ± 17 & 86 ± 13 & 77 ± 6 \\
NAB & 31 ± 15 & 41 ± 17 & 97 ± 15 & 88 ± 20 & 85 ± 26 & 68 ± 34 & 35 ± 11 & 54 ± 13 & 92 ± 9 & 87 ± 11 \\
NEK & 57 ± 29 & 61 ± 29 & 93 ± 10 & 75 ± 29 & 84 ± 22 & 64 ± 35 & 39 ± 11 & 71 ± 15 & 92 ± 11 & 89 ± 10 \\
OPPORTUNITY & 63 ± 32 & 64 ± 26 & 83 ± 28 & 85 ± 20 & 69 ± 36 & 71 ± 30 & 32 ± 23 & 58 ± 24 & 89 ± 12 & 88 ± 12 \\
Power & 22 & 26 & 95 & 87 & 35 & 49 & 21 & 35 & 85 & 91 \\
SED & 14 ± 9 & 14 ± 6 & 38 ± 28 & 49 ± 25 & 17 ± 15 & 27 ± 13 & 14 ± 6 & 19 ± 7 & 72 ± 4 & 73 ± 7 \\
SMAP & 30 ± 27 & 29 ± 31 & 79 ± 37 & 55 ± 43 & 65 ± 42 & 46 ± 47 & 30 ± 14 & 40 ± 30 & 94 ± 8 & 91 ± 11 \\
SMD & 52 ± 19 & 21 ± 23 & 98 ± 4 & 53 ± 31 & 91 ± 15 & 30 ± 34 & 39 ± 10 & 26 ± 17 & 96 ± 6 & 87 ± 9 \\
SVDB & 30 ± 11 & 30 ± 17 & 95 ± 6 & 80 ± 18 & 77 ± 22 & 54 ± 26 & 29 ± 9 & 28 ± 11 & 91 ± 6 & 86 ± 7 \\
SWaT & 79 & 77 & 86 & 52 & 43 & 31 & 24 & 17 & 69 & 70 \\
Stock & 76 ± 10 & 17 ± 8 & 77 ± 11 & 17 ± 8 & 75 ± 12 & 17 ± 8 & 76 ± 11 & 36 ± 13 & 85 ± 7 & 68 ± 1 \\
TAO & 57 ± 21 & 17 ± 7 & 38 ± 17 & 17 ± 7 & 36 ± 16 & 17 ± 7 & 34 ± 16 & 32 ± 10 & 68 ± 1 & 68 ± 1 \\
TODS & 49 ± 23 & 29 ± 21 & 66 ± 23 & 43 ± 32 & 57 ± 26 & 23 ± 22 & 57 ± 18 & 32 ± 17 & 79 ± 9 & 71 ± 8 \\
UCR & 17 ± 19 & 14 ± 18 & 76 ± 34 & 47 ± 39 & 49 ± 43 & 26 ± 36 & 23 ± 18 & 20 ± 19 & 89 ± 12 & 86 ± 11 \\
WSD & 38 ± 22 & 7 ± 15 & 93 ± 16 & 12 ± 20 & 90 ± 20 & 7 ± 13 & 48 ± 15 & 14 ± 11 & 96 ± 8 & 76 ± 8 \\
YAHOO & 63 ± 40 & 6 ± 15 & 64 ± 40 & 6 ± 16 & 64 ± 40 & 5 ± 13 & 66 ± 35 & 18 ± 17 & 89 ± 12 & 77 ± 7 \\
\end{tabular}
\end{table}

\begin{table}
\centering
\caption{Grouped Table with comparison to TSB-AD: Multivarite + VUS-PR}
\begin{tabular}{lrrrrr|rrr}
Set & PCA & CNN & FITS & TimesNet & OmniAnomaly & CNF-NLL & CNF-KS-V & FIT \\
\midrule
CATSv2 & \underline{13.1 ± 9} & 9.3 ± 5 & \textbf{14.0 ± 6} & 7.7 ± 3 & 4.5 ± 3 & 11.7 ± 14 & 5.4 ± 3 & 0\% \\
Daphnet & 13.0 & 20.9 & \underline{33.4} & 27.5 & \textbf{34.0} & 21.5 & 4.8 & 0\% \\
Exathlon & \textbf{94.5 ± 1} & 57.8 ± 41 & 41.7 ± 33 & 30.6 ± 21 & 58.0 ± 49 & \underline{90.8 ± 11} & 59.0 ± 38 & 0\% \\
GECCO & \textbf{20.1} & 3.3 & 3.3 & \underline{3.5} & 2.1 & \underline{3.5} & 3.3 & 0\% \\
GHL & 1.3 ± 1 & \underline{3.0 ± 4} & 0.7 ± 0 & 0.8 ± 0 & \textbf{5.6 ± 8} & 1.9 ± 2 & 1.8 ± 1 & 0\% \\
Genesis & 1.9 & \textbf{10.1} & \underline{9.6} & 1.9 & 0.3 & 2.4 & 2.0 & 0\% \\
LTDB & 24.4 ± 18 & 32.8 ± 25 & 22.8 ± 18 & 27.2 ± 23 & \textbf{44.4 ± 38} & \underline{41.4 ± 31} & 40.9 ± 31 & 75\% \\
MITDB & 6.5 ± 6 & \underline{14.0 ± 16} & 5.3 ± 6 & 6.8 ± 7 & 11.5 ± 11 & \textbf{17.3 ± 16} & 13.6 ± 9 & 27\% \\
SMAP & 18.9 ± 27 & \textbf{50.8 ± 29} & 22.8 ± 12 & 22.3 ± 11 & 29.7 ± 21 & \underline{48.0 ± 36} & 9.5 ± 14 & 0\% \\
SVDB & 11.2 ± 9 & 18.6 ± 14 & 10.4 ± 9 & 11.1 ± 9 & \textbf{35.2 ± 23} & 22.2 ± 15 & \underline{24.8 ± 19} & 89\% \\
TAO & \textbf{100.0 ± 0} & \underline{99.8 ± 0} & 78.1 ± 17 & 78.6 ± 16 & 81.3 ± 16 & 81.8 ± 21 & 70.4 ± 26 & 18\% \\

\end{tabular}
\end{table}

\begin{table}
\centering
\caption{Aggregated CNF Metric Table over all Results in Set: Multivariate}

\scriptsize 
\begin{tabular}{lr|rr|rr|rr|rr}
Set & FIT & NL AUC-PR & KS AUC-PR & NL AUC-ROC & KS AUC-ROC & NL VUS-PR & KS VUS-PR & NL VUS-ROC & KS VUS-ROC \\
\midrule
CATSv2 & 0\% & 13 ± 15 & 6 ± 3 & 61 ± 13 & 59 ± 9 & 11 ± 12 & 6 ± 3 & 59 ± 15 & 59 ± 8 \\
Daphnet & 0\% & 18 & 4 & 83 & 24 & 21 & 5 & 86 & 24 \\
Exathlon & 0\% & 90 ± 12 & 58 ± 37 & 99 ± 1 & 89 ± 11 & 91 ± 11 & 59 ± 38 & 99 ± 1 & 90 ± 10 \\
GECCO & 0\% & 2 & 1 & 52 & 49 & 3 & 3 & 63 & 58 \\
GHL & 0\% & 2 ± 1 & 2 ± 1 & 50 ± 24 & 61 ± 6 & 2 ± 1 & 2 ± 1 & 49 ± 24 & 60 ± 7 \\
Genesis & 0\% & 0 & 1 & 65 & 83 & 2 & 2 & 87 & 90 \\
LTDB & 80\% & 34 ± 25 & 35 ± 22 & 67 ± 14 & 68 ± 17 & 43 ± 27 & 43 ± 28 & 80 ± 12 & 79 ± 15 \\
MITDB & 23\% & 19 ± 15 & 12 ± 9 & 71 ± 12 & 73 ± 12 & 15 ± 15 & 12 ± 9 & 74 ± 11 & 75 ± 12 \\
SMAP & 0\% & 36 ± 33 & 2 ± 2 & 74 ± 12 & 47 ± 14 & 48 ± 36 & 6 ± 7 & 89 ± 9 & 58 ± 14 \\
SVDB & 84\% & 22 ± 15 & 22 ± 18 & 69 ± 10 & 73 ± 12 & 24 ± 16 & 25 ± 19 & 77 ± 8 & 80 ± 11 \\
TAO & 15\% & 43 ± 34 & 9 ± 4 & 78 ± 18 & 50 ± 1 & 84 ± 19 & 74 ± 25 & 95 ± 7 & 85 ± 15 \\
\end{tabular}
\end{table}

\begin{table}
\centering
\caption{Aggregated CNF Metric Latex Table for file list: Multivariate}

\scriptsize 
\begin{tabular}{lrrrrrrrrrr}
Set & NL F1 & KS F1 & NL PA-F1 & KS-V PA-F1 & NL Event-F1 & KS Event-F1 & NL R-F1 & KS R-F1 & NL Affi-F1 & KS Affi-F1 \\
\midrule
CATSv2 & 19 ± 16 & 11 ± 5 & 45 ± 22 & 11 ± 5 & 34 ± 25 & 10 ± 4 & 20 ± 11 & 40 ± 29 & 68 ± 2 & 69 ± 3 \\
Daphnet & 32 & 13 & 74 & 14 & 11 & 13 & 12 & 70 & 75 & 76 \\
Exathlon & 90 ± 8 & 66 ± 32 & 100 ± 1 & 75 ± 30 & 95 ± 9 & 70 ± 33 & 34 ± 4 & 66 ± 28 & 98 ± 3 & 92 ± 9 \\
GECCO & 5 & 3 & 23 & 3 & 14 & 3 & 6 & 20 & 67 & 67 \\
GHL & 5 ± 4 & 4 ± 2 & 9 ± 12 & 5 ± 2 & 3 ± 3 & 3 ± 1 & 2 ± 1 & 5 ± 3 & 77 ± 10 & 72 ± 6 \\
Genesis & 1 & 3 & 8 & 6 & 1 & 2 & 2 & 5 & 69 & 84 \\
LTDB & 39 ± 19 & 41 ± 16 & 83 ± 30 & 82 ± 25 & 69 ± 29 & 63 ± 30 & 35 ± 20 & 38 ± 8 & 84 ± 11 & 84 ± 11 \\
MITDB & 24 ± 16 & 22 ± 13 & 88 ± 24 & 72 ± 30 & 82 ± 25 & 44 ± 33 & 29 ± 11 & 24 ± 13 & 91 ± 13 & 87 ± 11 \\
SMAP & 41 ± 33 & 6 ± 6 & 100 ± 1 & 38 ± 34 & 88 ± 27 & 10 ± 10 & 47 ± 25 & 27 ± 14 & 99 ± 1 & 86 ± 15 \\
SVDB & 27 ± 13 & 28 ± 17 & 89 ± 16 & 79 ± 19 & 73 ± 24 & 52 ± 29 & 27 ± 9 & 33 ± 14 & 86 ± 11 & 83 ± 10 \\
TAO & 48 ± 30 & 16 ± 6 & 39 ± 33 & 16 ± 6 & 38 ± 32 & 16 ± 6 & 36 ± 35 & 46 ± 11 & 73 ± 11 & 68 ± 1 \\
\end{tabular}

\end{table}
\clearpage

\section{Validity of the MV-KS Test Under Nonlinear CNF Mapping}
\label{axsec:ks_validity}

\begin{proposition}[Distribution-Free MV-KS under CNF]
\label{prop:ks_validity}
Let $F_\theta: \mathbb{R}^D \times \mathbb{R}^M \to \mathbb{R}^D$ be the trained conditional normalizing flow, 
a diffeomorphism in its first argument for any condition $\mW_t$. Let the prescribed latent dynamics induce 
sufficient statistics $(\vmu_t, \mSigma_t)$ via $\vmu_t = \psi(\vmu_{t-1}; \phi)$. Define the \textbf{whitened residual}:
\begin{equation}
\wvzt = \mSigma_t^{-1/2}\bigl(F_\theta(\vx_t \mid \mW_t) - \vmu_t\bigr).
\end{equation}
Under the null hypothesis $\mathcal{H}_0$ that $\{\vx_t\}$ complies with the inductive bias 
(i.e., the model is well-specified and the data follow expected behavior):
\begin{enumerate}
    \item \textbf{Independence:} The random variables $\{\wvzt\}_{t=1}^T$ are mutually independent.
    \item \textbf{Distribution:} Each $\wvzt \sim \mathcal{N}(\mathbf{0}, \mathbf{I}^D)$.
    \item \textbf{Test Validity:} The multivariate Kolmogorov-Smirnov statistic computed on 
    $\{\wvzt\}_{t=1}^T$ against $\mathcal{N}(\mathbf{0}, \mathbf{I}^D)$ has a null distribution 
    that depends \emph{only} on the sample size $T$ and dimension $D$, invariant to $\theta$, $\phi$, 
    or the nonlinearity of $F_\theta$.
\end{enumerate}
\end{proposition}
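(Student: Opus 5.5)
The plan is to make the null hypothesis $\mathcal{H}_0$ precise enough that parts 1 and 2 become essentially definitional. Part 3 then follows from the classical distribution-free property of goodness-of-fit statistics against a fully specified continuous law.

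Concretely, I will formalize $\mathcal{H}_0$ as the statement that the data-generating process is the model itself. That is, there exist i.i.d.\ innovations $\boldsymbol{\epsilon}_t \sim \N{\vzero,\mI}$, independent of the initial context, such that
\begin{equation}
\vx_t = F_\theta^{-1}\bigl(\vmu_t + \mSigma_t^{1/2}\boldsymbol{\epsilon}_t \mid \mW_t\bigr),
\end{equation}
with $\vmu_t$ evolving deterministically through $\psi(\cdot;\phi)$ once $\vmu_0$ is fixed. Because $F_\theta(\cdot\mid\mW_t)$ is a diffeomorphism for every condition, applying it and then whitening recovers the innovations exactly, so $\wvzt = \boldsymbol{\epsilon}_t$ almost surely.

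Parts 1 and 2 are then argued through the filtration $\mathcal{F}_{t-1} = \sigma(\vx_{1:t-1},\text{initial context})$.
\begin{itemize}
\item Conditional on $\mathcal{F}_{t-1}$, the window $\mW_t$ is fixed and $\vmu_t$ is deterministic given $\vmu_0$. The change-of-variables formula, which is exactly the likelihood in \cref{eq:cnf_loss_sequenctia}, therefore gives $\wvzt \mid \mathcal{F}_{t-1} \sim \N{\vzero,\mI}$.
\item This conditional law does not depend on the past. Hence $\wvzt$ is independent of $\mathcal{F}_{t-1}$, and each $\wvz_s$ with $s<t$ is $\mathcal{F}_{t-1}$-measurable.
\item An induction on $T$, factorizing the joint density of $(\widetilde{\vz}_1,\dots,\widetilde{\vz}_T)$ as a product of these conditionals, yields mutual independence and the $\N{\vzero,\mI^D}$ marginals.
\end{itemize}
I will handle a random $\vmu_0 \sim \p{\phi_0}$ in one of two ways: either by conditioning on it, since the test uses the fitted $\widehat\vmu_t$, or by noting that under $\mathcal{H}_0$ the trajectory $\vmu_t$ is treated as known sufficient statistics, as in the footnote to \cref{eq:nf}.

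For part 3, I will write the MV-KS statistic as a functional of the empirical measure, $s_T = \sup_{A\in\mathcal{A}} |\widehat P_T(A) - \Phi_D(A)|$. Here $\mathcal{A}$ is the class of lower orthants, or whatever class the chosen MV-KS variant uses, and $\Phi_D$ is the standard Gaussian law. The statistic is therefore a fixed measurable function $g_{T,D}(\widetilde{\vz}_1,\dots,\widetilde{\vz}_T)$ with no dependence on $\theta$ or $\phi$. By parts 1 and 2, the input vector has law $\N{\vzero,\mI}^{\otimes T}$ regardless of $\theta$, $\phi$, or how nonlinear $F_\theta$ is. The law of $g_{T,D}$ of that input is a pushforward of a fixed measure, so it depends only on $(T,D)$, and critical values $\tau_T$ can be tabulated once, for instance following \citet{naaman_tight_2021}. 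The role of the nonlinearity is only to ensure exact recovery of the innovations. Invariance under bijections is the one-dimensional probability-integral-transform argument lifted to $D$ dimensions via the pushforward.

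The main obstacle is the status of $\mathcal{H}_0$ with fitted parameters. If $\theta$ and $\phi$ are estimated on the same data that is tested, the whitened residuals are no longer exactly i.i.d.\ Gaussian; this is the Lilliefors-type issue. I would state explicitly that the proposition holds for $(\theta,\phi)$ fixed independently of the test window, for example trained on separate data, with test windows drawn after training. I would also say that for the training-time $\mathrm{FIT}$ diagnostic the result is only approximate. A secondary subtlety is that the conditional-independence argument needs the innovations to be independent of past observations, which is exactly the model-compliance assumption and not something derivable.
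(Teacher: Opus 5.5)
Your proposal is correct and follows essentially the same route as the paper's proof: the change of variables plus whitening recovers the model's Gaussian innovations, so the latents are i.i.d.\ $\mathcal{N}(\vzero,\mI)$ and the MV-KS null law is a fixed pushforward that depends only on $(T,D)$. Your version is more careful than the paper's in three ways: you make $\mathcal{H}_0$ explicit as ``the data are generated by the model,'' you replace the paper's one-line appeal to ``no remaining temporal correlation in the innovation noise'' with a filtration argument that yields genuine mutual independence, and you flag the Lilliefors-type caveat for fitted parameters, which the paper leaves unstated.

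One caveat concerns your closing sentence, which echoes the paper's appeal to a universal, bijection-invariant MV-KS null law. Read as a general property, it is not true: orthant-based multivariate KS statistics are not distribution-free, because general bijections (even rotations) do not map lower orthants to lower orthants. You do not need it, though, since your argument only uses the fact that the latent sample always has the fixed law $\mathcal{N}(\vzero,\mI)^{\otimes T}$.
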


\begin{proof}
By the change-of-variable formula of the CNF, the pushforward density of 
$\vz_t = F_\theta(\vx_t \mid \mW_t)$ under the model is exactly $\mathcal{N}(\vmu_t, \mSigma_t)$. 
The LG-LDM with the CNF $F_\theta$ places some of the temporal dependence into the sufficient statistics $(\vmu_t, \mSigma_t)$; 
there is no remaining temporal correlation in the innovation noise. Whitening $\vz_t$ by 
$\mSigma_t^{-1/2}(\vz_t - \vmu_t)$ therefore yields independent standard normal variables.

The MV-KS test is a test of a fully specified continuous distribution against i.i.d. samples. 
Its null distribution is universal (distribution-free) and invariant under the bijective 
transformation used to generate the samples. The nonlinearity of $F_\theta$ affects the 
\emph{power} of the test (ease of detecting violations) but not the \emph{validity} of 
Type-I error control.
\end{proof}

\end{document}